\DeclareMathOperator{\EX}{\mathbb{E}}
\DeclareMathOperator{\R}{\mathbb{R}}
\DeclareMathOperator{\Ll}{\mathcal{L}}
\newtheorem{theorem}{Theorem}[section]
\newtheorem{rem}{Remark}[section]
\newtheorem{example}{Example}[section]
\definecolor{customred}{RGB}{215,48,39}
\definecolor{customgreen}{RGB}{26,152,80}
\definecolor{customgray}{gray}{0.9}
\newcommand{\cmark}{\textcolor{customgreen}{\ding{51}}}
\newcommand{\xmark}{\textcolor{customred}{\ding{55}}}
\begin{document}

%

%

\twocolumn[

\aistatstitle{TRADE: Transfer of Distributions between External Conditions with Normalizing Flows}

\runningauthor{Stefan Wahl, Armand Rousselot, Felix Draxler, Henrik Schopmans, Ullrich Köthe}

\aistatsauthor{ Stefan Wahl$^{1, \star}$ \And Armand Rousselot$^{1, \star, \dagger}$ \And Felix Draxler$^1$ \And Henrik Schopmans$^{2}$ \And Ullrich Köthe$^1$ }

\aistatsaddress{  $^1$ Computer Vision and Learning Lab, Heidelberg University\\ 
$^2$ Institute of Theoretical Informatics, Karlsruhe Institute of Technology \\
$^\star$Equal contribution\\ $^\dagger$Corresponding author: armand.rousselot@iwr.uni-heidelberg.de } ]

\begin{abstract}
Modeling distributions that depend on external control parameters is a common scenario in diverse applications like molecular simulations, where system properties like temperature affect molecular configurations. Despite the relevance of these applications, existing solutions are unsatisfactory as they require severely restricted model architectures or rely on energy-based training, which is prone to instability. We introduce TRADE, which overcomes these limitations by formulating the learning process as a boundary value problem. By initially training the model for a specific condition using either i.i.d.~samples or backward KL training, we establish a boundary distribution. We then propagate this information across other conditions using the gradient of the unnormalized density with respect to the external parameter. This formulation, akin to the principles of physics-informed neural networks, allows us to efficiently learn parameter-dependent distributions without restrictive assumptions.
Experimentally, we demonstrate that TRADE achieves excellent results in a wide range of applications, ranging from Bayesian inference and molecular simulations to physical lattice models. Our code is available at \url{https://github.com/vislearn/trade}
\end{abstract}

\section{INTRODUCTION}
\label{sec:intro}

Generative models have achieved impressive performance in scientific applications among many other fields \citep{noe2019boltzmann, butter2022generative, cranmer2020frontier, sanchez2018inverse}. Oftentimes, such systems can depend on external control parameters, such as temperature governing the behavior of thermodynamic systems, coupling constants in physical models, or a tempered likelihood posterior in Bayesian inference \citep{wuttke2014temperature, friel2008marginal, Pawlowski_2017}. A major challenge in such cases is acquiring training data for a complete range of control parameters, which can quickly become infeasible.

\begin{figure}
    \centering
    \includegraphics[width=1.0\linewidth]{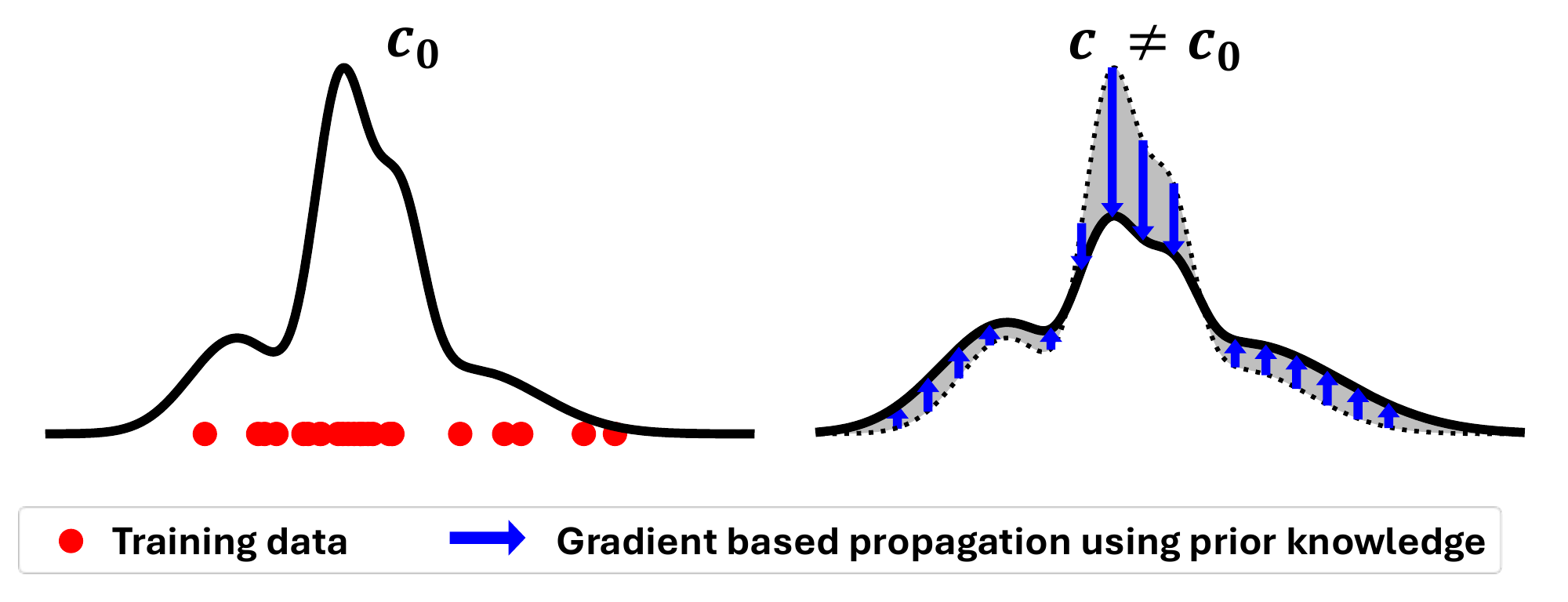}
    \caption{Our approach to train a conditional normalizing flow $p_\theta(x|c)$. \textbf{Left:} At $c=c_0$, the flow is trained using NLL. \textbf{Right:} By learning the gradient of the distribution with respect to $c$ based on prior knowledge, the distribution learned at $c_0$ is propagated to other conditions $c \neq c_0$ without additional training data.}
    \label{fig:MethodOverview}
\end{figure}
In this work, we focus on the common task of learning generative models in the case where we have access to the functional form of the unnormalized density, such as learning the distributions of equilibrium states in physical systems like molecules (Boltzmann distributions) or variational inference. We approach this problem by learning a single generative model that takes the external condition $c$ as a parameter: $p_\theta(x|c) \approx p(x|c)$.

Several works have attempted to address this problem before. One approach applies architectural restrictions to allow one particular functional form of external parameter dependence \citep{dibak2022temperature}. However, this restriction has recently been shown to incur severe limitations in expressivity \citep{draxler2024universality}. Energy-based training has been proposed as another method \citep{schebek2024efficient, invernizzi2022skipping, Wirnsberger_2020}, but can exhibit unfavorable properties like mode-seeking behavior \citep{minka2005divergence, felardos2023designing}, which has also been shown to be a problem in practice \citep{Wirnsberger_2020} (see \cref{app:GMM} for an example). Other works require data to be available at the target parameters \citep{wang2022data, Wirnsberger_2020}, which can become prohibitively expensive. We overcome these limitations: We allow learning arbitrary continuous dependencies of the target density on external parameters and train unconstrained architectures. Our central idea is to formulate the training of a conditional probability density as a boundary value problem: The boundary is learned on a fixed reference condition $c=c_0$, and then the resulting density is extrapolated using the known functional dependence on the condition underlying the problem. Our approach is summarized in \cref{fig:MethodOverview}.

In summary, we contribute:
\begin{itemize}
    \item We introduce TRADE, a method for learning generative models with arbitrary continuous dependencies on external conditions. Learning is enabled by a novel boundary value problem formulation.
    \item TRADE uses unconstrained architectures, facilitating the application to more complex target distributions.
    \item TRADE can be trained data-free or with data only available at the boundary $c_0$, making it an efficient approach in cases where acquiring data for a full range of control parameters is infeasible.
    \item TRADE achieves excellent results in a wide range of experiments including Bayesian inference, molecular simulations and physical lattice models
\end{itemize}

\section{RELATED WORK}
\label{sec:RelatedWork}
A line of work which is closely related to ours are physics-informed neural networks (PINNs), which were first introduced by \citet{RAISSI2019686}. The idea is to represent the solution of a PDE via a neural network, where the PDE is enforced via the loss function. The boundary values are usually given in the form of limited or noisy data. This approach has been successfully applied to a variety of scientific problems, including fluid dynamics, heat transfer, and material science \citep{cuomo2022scientific}. The works of \citet{liu2023pinf, guo2022normalizing} combine PINNs with normalizing flows for solving SDEs like the Fokker-Planck equation. \citet{cena2024physics} use a physics-informed loss to improve normalizing flow performance for detecting satellite power system faults. In our work, we similarly leverage a PDE constraint during training. However, our TRADE PDE is a function of the explicit dependence of the unnormalized distribution on an external control parameter. This is in contrast to standard PINNs, which typically enforce physical laws on the generated data itself. By embedding the control parameter directly into the PDE, TRADE is able to extrapolate to new control parameters even when only data from a single one is available.

A special case of distribution transfer is the generalization of a physical system across multiple thermodynamic states. The approach proposed by temperature steerable flows (TSF) \citep{dibak2022temperature} learns a transformation of samples with constant Jacobian determinant, which makes temperature scaling of the learned data distribution equivalent to temperature scaling of the latent distribution. This method is restricted to temperature scaling and, as shown in \citet{draxler2024universality}, such a transformation is severely limited in expressivity by design. This stands in contrast to TRADE, which introduces no additional restrictions on the model architecture. We showcase the effect of these architectural restrictions in \cref{app:GMM}.

Conditional Boltzmann generators (BG) \citep{schebek2024efficient} and learned replica exchange (LREX) \citep{invernizzi2022skipping} map samples from one thermodynamic state to another by training the model with backward KL, which can lead to problems of mode seeking behavior \citep{minka2005divergence, felardos2023designing}. Our method not only avoids these problems, but can also be trained knowing only the dependence of the energy function on the external parameter. This can even be possible in cases where computing the full ground truth energy of the system is infeasible, while the backward KL training objective requires this computation. Temperature-annealed Boltzmann generators \citet{schopmans2025temperature} tackle the task of training entirely data free at any target temperature. To avoid problems of backward KL training, which predominantly occur at low temperatures, they establish a boundary distribution at high temperatures using backward KL, then iteratively transfer it to the desired temperature by re-training with reweighted samples from the Boltzmann generator. As discussed in the next section, reweighting to low temperatures may lead to inefficient training due to low weights. Using TRADE to attempt to reduce the number of re-training steps could be an interesting future direction of research.

Learned Bennett acceptance ratio (LBAR) \citep{Wirnsberger_2020} learns to transfer distributions between two thermodynamic states to increase the efficiency of importance sampling, by using data from target thermodynamic state. Using data from multiple thermodynamic states, \citep{wang2022data} use diffusion models to interpolate between different temperatures and sample from the full range. Both of these methods require data at or around the target thermodynamic states. In contrast, TRADE has the ability to extrapolate from only a single thermodynamic state to others, by leveraging the information gained from the functional dependence of the unnormalized distribution on the control parameter.

\section{BACKGROUND}
\label{sec:NormalizingFlows}

Normalizing flows are a class of generative models that learn an invertible mapping $f_\theta: \R^d \mapsto \R^d$ mapping samples from the data distribution $p(x)$ to a tractable latent distribution $p_z(z)$, typically a Gaussian distribution \citep{kobyzev2020normalizing}. New samples from the model can be obtained by passing samples of the latent distribution through the inverse function $f_\theta^{-1}$. The model likelihood can be obtained by the change of variables formula
\begin{equation}
    \log p_\theta(x) = \log p_z(f_\theta(x)) + \log \left| \frac{\partial f_\theta(x)}{\partial x} \right|.
    \label{eq:CoV}
\end{equation}
Here, $\frac{\partial f_\theta(x)}{\partial x} \in \R^{d \times d}$ is the Jacobian matrix of $f_\theta$ and $| \cdot |$ denotes the absolute value of its determinant. In general, normalizing flows are trained by minimizing the negative log-likelihood (NLL) of the training data, which is equivalent to minimizing the (forward) KL divergence between data and model distribution
\begin{equation}
    D_\text{KL}(p(x) \| p_\theta(x))
    = \mathbb E_{x \sim p(x)}[\log p(x) - \log p_\theta(x)].
    \label{eq:ForwardKL}
\end{equation}
For typical neural networks, computing the Jacobian determinant during training can quickly become intractable. There are several ways this problem can be addressed. Some works define a surrogate objective that does not require the full Jacobian \citep{draxler2024free, sorrenson2024lifting}. Other works construct neural networks using so-called coupling blocks \citep{dinh2014nice, dinh2016density}, whose structure makes the network Jacobian determinant and inverse directly tractable. We choose the latter approach for this work, as it has the advantage of allowing for a tractable computation of the likelihood during training. 

Normalizing flows can also be trained in the absence of data \citep{felardos2023designing} if an (unnormalized) ground truth density $q(x) \propto p(x)$ is known, the two simplest methods being backward KL training and reweighting. Backward KL maximizes the likelihood of model samples under $p(x)$
\begin{equation}
    D_\text{KL}(p_\theta(x) \| p(x))
    \propto \mathbb E_{p_\theta(x)}[\log p_\theta(x) - \log q(x)],
    \label{eq:BackwardKL}
\end{equation}
which can suffer from problems like mode seeking behavior \citep{minka2005divergence, felardos2023designing, Wirnsberger_2020}. Reweighting uses samples from another distribution $\hat{p}$ and reweights them to simulate forward KL training with $p(x)$
\begin{align}
    D_\text{KL}(p(x) \| p_\theta(x)) 
    \propto
    \EX_{\hat{p}(x)}\left[  \frac{q(x)}{\hat{p}(x)} \log \left(\frac{q(x)}{p_\theta(x)} \right) \right],
    \label{eq:ReweightedKL}
\end{align}
but can suffer from unstable or inefficient training resulting from high and low weights respectively \citep{agrawal2020advances}. We illustrate failure cases of both backward KL training and reweighting in \cref{app:GMM}.

\section{METHOD}
\label{sec:Method}
\subsection{Conditional Distribution As Partial Differential Equation}
\label{sec:ConditionalDistributionAsPDE}

We are interested in approximating conditional probability distributions:
\begin{align}
    p_\theta(x|c) \approx p(x|c) = q(x|c) \frac{1}{\int q(x|c) dx},
\end{align}
where $q(x|c)$ is an unnormalized density and $p(x|c)$ its normalized variant.

We consider the case where (a)~we can successfully train our model for some initial $p(x|c_0)$, for example using i.i.d.~samples or via backward KL training (see \cref{sec:NormalizingFlows}), and (b)~we have access to the dependence of the \textit{unnormalized} $q(x|c)$ on $c$, as it is for example the case in physical systems where the temperature $T$ regulates the relative probability of states in the form of the Boltzmann distributions:
\begin{equation}
    q(x|T) = e^{-\frac{E(x)}{k_B T}}.
    \label{eq:BoltzmannDistribution}
\end{equation}
$k_B$ is the Boltzmann constant.

This allows treating $p_\theta(x|c)$ as the solution to the following partial differential equation (PDE):
\begin{align}
    p_\theta(x|c_0) &= p(x|c_0), \label{eq:BoundaryDensity} \\
    \frac{\partial}{\partial c} p_\theta(x|c) &= \frac{\partial}{\partial c} p(x|c). \label{eq:ConditionGradient}
\end{align}
In \cref{app:Proofs} we show that this boundary value problem has a unique solution. Intuitively, \cref{eq:BoundaryDensity} ensures that the correct distribution is learned at $c=c_0$. \Cref{eq:ConditionGradient} then propagates this information according to the gradient of the density. As noted in \cref{sec:NormalizingFlows}, the distribution at $p_\theta(x|c_0)$ can be learned from samples (forward KL) or using the unnormalized density $q(x|c_0)$. This ensures \cref{eq:BoundaryDensity}.

It is left to align the gradients of $p_\theta(x|c)$ with respect to $c$ to the gradients of $p(x|c)$ to fulfill \cref{eq:ConditionGradient}. Using the unnormalized density, we find:
\begin{equation}
    \frac{\partial}{\partial c} \log p(x|c) = \frac{\partial}{\partial c} \log q(x|c) - \frac{\partial}{\partial c} \log \int q(x|c) dx.
\end{equation}
We enable computing the latter term, the derivative of the normalization constant, via the following result:
\begin{theorem}
    \label{thm:ConditionGradientUnormalized}
    Given an \emph{unnormalized} density $q(x|c)$ that is differentiable in $c$, the derivative of the \emph{normalized} density $p(x|c)$ with respect to $c$ reads:
    \begin{equation}
        \frac{\partial}{\partial c}\log p(x|c)  
        =\frac{\partial }{\partial c}\log q(x|c)-\mathbb E_{p(x|c)}\!\left[\frac{\partial}{\partial c}\log q(x|c)\right]
        \label{eq:ConditionGradientUnormalized}
    \end{equation}
\end{theorem}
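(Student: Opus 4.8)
The plan is to write $p(x|c) = q(x|c)/Z(c)$ with the normalization constant $Z(c) = \int q(x|c)\,dx$, take logarithms, and differentiate. Taking $\log$ gives $\log p(x|c) = \log q(x|c) - \log Z(c)$, so by linearity of $\partial/\partial c$ the claim reduces entirely to showing that $\frac{\partial}{\partial c}\log Z(c) = \mathbb{E}_{p(x|c)}\!\left[\frac{\partial}{\partial c}\log q(x|c)\right]$. This is the ``log-partition-function derivative equals expectation of the score'' identity familiar from exponential families, but here stated for an arbitrary $c$-parametrized family.

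The first key step is to differentiate $Z(c)$ under the integral sign: $\frac{\partial}{\partial c} Z(c) = \frac{\partial}{\partial c}\int q(x|c)\,dx = \int \frac{\partial}{\partial c} q(x|c)\,dx$. The second step is the algebraic substitution $\frac{\partial}{\partial c} q(x|c) = q(x|c)\,\frac{\partial}{\partial c}\log q(x|c)$, valid wherever $q(x|c) > 0$ (and the integrand vanishes elsewhere). Combining, $\frac{\partial}{\partial c} Z(c) = \int q(x|c)\,\frac{\partial}{\partial c}\log q(x|c)\,dx$. Dividing by $Z(c)$ and recognizing $q(x|c)/Z(c) = p(x|c)$ yields $\frac{1}{Z(c)}\frac{\partial}{\partial c}Z(c) = \int p(x|c)\,\frac{\partial}{\partial c}\log q(x|c)\,dx = \mathbb{E}_{p(x|c)}\!\left[\frac{\partial}{\partial c}\log q(x|c)\right]$, and the left-hand side is exactly $\frac{\partial}{\partial c}\log Z(c)$. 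Substituting back into $\partial_c \log p = \partial_c \log q - \partial_c \log Z$ gives \cref{eq:ConditionGradientUnormalized}.

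The only real obstacle is rigorously justifying the interchange of differentiation and integration in the first step. This requires a dominated-convergence / Leibniz-rule hypothesis: for instance, that $q(x|c)$ is differentiable in $c$ for each $x$, measurable in $x$, and that $|\partial_c q(x|\cdot)|$ is locally dominated (uniformly in a neighborhood of $c$) by an integrable function of $x$. Under the paper's standing assumption that $q(x|c)$ is differentiable in $c$ together with such mild integrability, the interchange is legitimate; I would state this regularity assumption explicitly and then the remaining manipulations are elementary. Everything else — the chain rule for $\log$, the identity $\partial_c q = q\,\partial_c\log q$, and the rewriting of the integral as an expectation under $p(x|c)$ — is routine.
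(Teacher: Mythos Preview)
Your proposal is correct and follows essentially the same route as the paper: decompose $\log p = \log q - \log Z$, differentiate, and identify $\partial_c \log Z(c)$ with the expectation by swapping derivative and integral and using $\partial_c q = q\,\partial_c \log q$. The paper likewise flags the interchange of derivative and integral as the one step needing justification and defers it to a dominated-convergence argument in the appendix, just as you suggest.
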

This reformulation allows efficient training with \cref{eq:ConditionGradient} via the functional form of $q(x|c)$, which is tractable in many cases. For example, in the case of the Boltzmann distribution in \cref{eq:BoltzmannDistribution}:
\begin{equation}\label{eq:DerivativeBoltzmannDistribution}
    \frac{\partial}{\partial T} \log q(x|T) = \frac{E(x)}{k_B T^2}
\end{equation}

There are several choices to estimate the expectation value $\mathbb E_{p(x|c)}[\cdot]$, which we compare later in \cref{sec:Ablation}. The proof of \cref{thm:ConditionGradientUnormalized} is straightforward:
\begin{proof}
    Compute the derivative of the true normalized distribution with respect to the condition. We write the partition function as $Z(c) = \int q(x|c) dx$:
    \begin{align}
        \frac{\partial}{\partial c} \log p(x|c) &
        = \frac{\partial}{\partial c} \log q(x|c) - \frac{\partial}{\partial c} \log Z(c).\label{eq:Derivative-Ground-Truth-Proof-Theorem-1}
    \end{align}
    The only term left to consider is the derivative of the normalization as a function of the condition:
    \begin{align}
        \frac{\partial}{\partial c} \log Z(c) &
        = \frac{1}{Z(c)} \frac{\partial}{\partial c} \int q(x|c) dx \\&
        \overset{(*)}{=} \frac{1}{Z(c)} \int \frac{\partial}{\partial c} q(x|c) dx \\&
        = \frac{1}{Z(c)} \int \frac{\partial}{\partial c} e^{\log q(x|c)} dx \\&
        = \frac{1}{Z(c)} \int e^{\log q(x|c)} \frac{\partial}{\partial c} \log q(x|c) dx \\&
        = \int p(x|c) \frac{\partial}{\partial c} \log q(x|c) dx. \label{eq:Derivative-Z-As-Expectation-Value-Proof-Theorem-1}
    \end{align}
    Combining \cref{eq:Derivative-Ground-Truth-Proof-Theorem-1,eq:Derivative-Z-As-Expectation-Value-Proof-Theorem-1} yields the result. See \cref{app:SwapDerivativeIntegral} for why we can swap integral and derivative in $(*)$.
\end{proof}

\subsection{Training Objective}

From the PDE in \cref{eq:BoundaryDensity,eq:ConditionGradient}, we construct the following loss terms for training. First, we consider some suitable objective $d(\cdot, \cdot)$ which learns the density at the boundary (see \cref{sec:NormalizingFlows}):
\begin{align}
    \Ll_\text{boundary} = d(p(x|c_0), p_\theta(x|c_0)).
\end{align}

Second, we add a loss that fixes the derivatives of the learned density to follow \cref{eq:ConditionGradient}. We directly insert:
\begin{equation}\label{eq:GradientBasedLossContributionTrade}
    \begin{matrix*}[l]
        \Ll_\text{grad}(c, x) = \\
        \left\| \frac{\partial}{\partial c} \log \left(\frac{p_\theta(x|c)}{q(x|c)}\right) + \EX_{\tilde x \sim p(x|c)}\!\left[\frac{\partial}{\partial c} \log q(\tilde x|c)\right] \right\|^2.
    \end{matrix*}
\end{equation}
We evaluate the gradient loss $\Ll_\text{grad}(c, x)$ at points sampled from proposal distributions $p_{\operatorname{grad}}(c)$ (see \cref{sec:ConditionSampling}) and $p_{\operatorname{grad}}(x|c)$ (see \cref{sec:EvaluationPointSampling}).
To evaluate the expectation of the gradient w.r.t.~$c$, we discuss possible options in \cref{sec:ExpectationValueComputation}.

Combining the two loss terms, we find the following differentiable optimization objective:
\begin{equation}
    \Ll = \Ll_\text{boundary} + \lambda \EX_{p_{\operatorname{grad}}(c, x)}[\Ll_\text{grad}(c, x)].
    \label{eq:FullLoss}
\end{equation}
Inserting \cref{eq:BoundaryDensity,eq:ConditionGradient} shows that this loss is indeed minimized for the true solution.

\subsection{Practical Implementation}\label{sec:practical-implementation}

The formulation of the loss in \cref{eq:FullLoss} leaves several design choices open. We propose best practices in this section and perform an ablation in \cref{sec:Ablation}.

\subsubsection{Discretization of the Condition}

The first choice is to decide whether one should discretize the range of possible conditions $C = [c_\text{min}, c_\text{max}]$ into a grid $\tilde{C}$ or sample from $C$ continuously. Discretization comes with the advantage of being able to save information from previous batches at each grid point and incorporate it into e.g.~the definition of $p_{\operatorname{grad}}(c)$ or a better estimation of $\EX_{p(x|c)}\!\left[\frac{\partial}{\partial c} \log q(x|c)\right]$, while continuous sampling avoids discretization errors. In the following we provide design choices for both methods. For low- to medium-dimensional problems we find that both methods can perform equally well apart from a small discretization error (see \cref{sec:Ablation}). For high-dimensional problem where the ground-truth energy $q(x|c)$ is known, the discretized loss might yield better results.

\subsubsection{Sampling of the Conditioning}
\label{sec:ConditionSampling}

The main idea behind our approach is that the flow learns the distribution explicitly only at the boundary $c=c_0$. The learned distribution is then propagated to $c \neq c_0$ through the gradient loss. Intuitively, this implies that the conditions near the boundary $c \approx c_0$ should be learned first, and the longer the training runs, the wider the interval of conditions should be.

Using the discretized grid of conditions $\tilde{C}$, we can apply a scheme inspired by the temporal causality weights proposed in \citet{wang2022respectingcausalityneedtraining}. It samples conditions $c$ weighted by the accumulated gradient loss from the reference point $c_0$:
\begin{align}
    \label{eq:CausalityWeightsDefinition}
    p_\text{grad}(c) &\propto \exp\left( -\epsilon \int_{c_0}^{c} \Ll_\text{grad}(\tilde c) d \tilde c \right) \\&
    \approx \exp\left( -\epsilon \sum_{c_i \in [c_0, c] \cap \tilde{C}} \Ll_\text{grad}(c_i ) \right)
\end{align}
If $\Ll_\text{grad}(c')$ is large for some $c'$, then the conditions further away from $c_0$ than $c'$ are weighted down.
We keep track of the loss values at the different conditions $\Ll_\text{grad}(c_i)$ during training using exponential averaging.

Without discretization, the integral in \cref{eq:CausalityWeightsDefinition} is intractable. In this case we adapt a different schedule to sample $c$, based on the current training step $t$, which we describe in \cref{app:ContinuousCSampling}.

\subsubsection{Sampling of Evaluation Points}
\label{sec:EvaluationPointSampling}

The next design decision is at which points $x$ to evaluate the gradient loss. By \cref{eq:ConditionGradient}, the gradient should follow the target distribution everywhere on the domain. In principle, we could therefore choose the proposal distribution $p_{\operatorname{grad}}(x|c)$ arbitrarily as long as it captures the domain.

In practice, we find that choosing the model distribution $p_{\operatorname{grad}}(x|c) = p_\theta(x|c)$ offers a good trade-off between exploration and exploitation. A small improvement can be achieved by perturbing the resulting samples with a small amount of Gaussian noise. In addition, we find that using prior knowledge about the sampling process, such as symmetries, can be beneficial.



\subsubsection{Computation of Expectation Values}
\label{sec:ExpectationValueComputation}

\Cref{thm:ConditionGradientUnormalized} allows to compute the derivative of the normalized density $\frac{\partial}{\partial c} p(x|c)$ via access to the derivative of the unnormalized density $\frac{\partial}{\partial c} q(x|c)$. However, it also involves its expectation over samples from $p(x|c)$:
\begin{equation}
    \bar\nabla(c) := \EX_{x \sim p(x|c)}\left[ \frac{\partial}{\partial c} \log q(x|c) \right].
\end{equation}
For example, in Boltzmann distributions, this corresponds to the average energy at the evaluated temperature rescaled by $1/(k_BT^2)$.

To compute this expectation,  we perform self-normalized importance sampling (SNIS) using $N$ samples from a base distribution $p_{\operatorname{base}}(x|c)$, together with the unnormalized density:
\begin{align}
    \bar\nabla(c) &= \frac{\EX_{x \sim p_{\operatorname{base}}(x|c)}\left[ \frac{q(x|c)}{p_{\operatorname{base}}(x|c)} \frac{\partial}{\partial c} \log q(x|c) \right]}{\EX_{x \sim p_{\operatorname{base}}(x|c)}\left[ \frac{q(x|c)}{p_{\operatorname{base}}(x|c)} \right]} \\&
    \approx \frac{\sum_{i=1}^N \frac{q(x_i|c)}{p_{\operatorname{base}}(x_i|c)} \frac{\partial}{\partial c} \log q(x_i|c)}{\sum_{i=1}^N \frac{q(x_i|c)}{p_{\operatorname{base}}(x_i|c)} }
\end{align}
The numerator estimates the gradients, and the denominator correctly scales it with the estimated normalization constant of $q(x|c)$. Although SNIS performs well in our experiments and converges to the true expectation value, the estimator remains biased for finite $N$ with a bias of $\mathcal{O}\left(\frac{1}{N}\right)$ (see, e.g., \citet{cardoso2022brsnisbiasreducedselfnormalized}). In this scheme we can use any base distribution which covers the domain. In \cref{sec:Ablation}, we test different choices for $p_{\operatorname{base}}(x|c)$ and again find that using the current model estimate $p_\theta(x|c)$ of the distribution at parameter $c$ generally works best.

When using the discretized domain $\tilde{C}$, we find that evaluating the expectation values for every point on the grid in specified intervals and taking a running exponential average allows for using much larger sample sizes $N$ and more accurate expectation value estimates at the same overall computational cost.

\subsubsection{Energy-Free Training of TRADE}
\label{sec:EnergyFreeTraining}

In some cases, only the functional form of the derivative $\frac{\partial q(x|c)}{\partial c}$ may be known, but $q(x|c)$ itself contains a term that is unknown (e.g. \cref{sec:TwoMoons}). In this case, we can often estimate this unknown term from the current model, via $q(x|c) \approx p_\theta(x|c)$.

For example, for performing temperature scaling of a Boltzmann distribution we are interested in the derivative in \cref{eq:DerivativeBoltzmannDistribution}.

If $E(x)$ is unknown or computationally expensive, we can substitute $E(x) \approx - k_BT_0\log p_\theta(x|T_0)$. This means we can train TRADE only based on samples, without access to the ground truth density. This is impossible with methods that rely on backward KL training.

\section{EXPERIMENTS}
\label{sec:Experiments}

For all experiments presented in this section, experimental details are provided in \cref{app:ExperimentalDetails}.

\subsection{Ablation Study: Multidimensional Wells}
\label{sec:Ablation}

\begin{table*}[h!tb]
\caption{Ablation study over different methods and design choices on a multidimensional-well system in $d=5$ dimensions. We train models at temperature $T=1.0$ and evaluate their ability to transfer the distribution to $T=0.5$ in terms of NLL and ESS. Standard deviations are evaluated over 3 separate runs. TRADE (lower block) manages to achieve the best performance in comparison to several baseline methods (upper block). Gray rows mark configurations which were unstable. }
\label{tab:Ablation}
\centering
\resizebox{0.95\linewidth}{!}{
\begin{tabular}{cccccccc}
\toprule
\multicolumn{4}{c}{Baselines} & NLL T=0.5 $\downarrow$ & NLL T=1.0 $\downarrow$ & ESS T=0.5 $\uparrow$ & ESS T=1.0 $\uparrow$ \\
\midrule
\multicolumn{4}{c}{Forward KL training at $T=1.0$ only} & 2.81 ± 0.0 & 4.37 ± 0.0 & 42.2 ± 0.2 \% & 98.8 ± 0.0 \% \\
\multicolumn{4}{c}{Backward KL training} & 2.29 ± 0.01 & 4.40 ± 0.0 & 84.1 ± 0.3 \% & 92.3 ± 0.2 \% \\
\multicolumn{4}{c}{Training with reweighted samples} & 2.23 ± 0.0 & 4.37 ± 0.0 & 91.3 ± 1.1 \% & 98.9 ± 0.0 \% \\
\multicolumn{4}{c}{Temperature-Steerable Flows \citep{dibak2022temperature}} & 2.45 ± 0.02 & 4.45 ± 0.01 & 69.5 ± 1.7 \% & 81.7 ± 4.8 \% \\
\midrule
Discretize & $p_{\operatorname{base}}(x|c)$ & \makecell{Use ground\\truth $q(x|c)$} & \makecell{Causality\\weights} & &&&\\
\midrule
\rowcolor{customgray} \cmark & $p_\theta(x|c)$ & \cmark & \xmark & 11.3 ± 9.93 & 5.79 ± 0.26 & 0.0 ± 0.0 \% & 10.3 ± 2.3 \% \\
\rowcolor{customgray} \cmark & $p_\theta(x|c)$ & \xmark & \cmark & 5.15 ± 1.15 & 4.58 ± 0.14 & 0.2 ± 0.3 \% & 63.7 ± 19.6 \% \\
\rowcolor{customgray} \xmark & $p_\theta(x|c_0)$ & \xmark &   & 13.45 ± 4.56 & 9.2 ± 2.07 & 0.1 ± 0.1 \% & 0.6 ± 0.5 \% \\
\xmark & $p(x|c_0)$ & \cmark &   & 2.57 ± 0.04 & 4.63 ± 0.02 & 42.7 ± 0.02 \% & 59.9 ± 2.6 \% \\
\xmark & $p(x|c_0)$ & \xmark &   & 2.65 ± 0.03 & 4.48 ± 0.01 & 34.5 ± 3.5 \% & 78.4 ± 1.4 \% \\
\xmark & $p_\theta(x|c_0)$ & \cmark &   & 2.27 ± 0.0 & 4.38 ± 0.0 & 81.1 ± 0.4 \% & 97. ± 0.1 \% \\
\xmark & $p_\theta(x|c)$ & \cmark &   & \textbf{2.22} ± 0.0 & \textbf{4.36} ± 0.0 & \textbf{96.7} ± 1.1 \% & \textbf{99.3} ± 0.1 \% \\
\xmark & $p_\theta(x|c)$ & \xmark &   & \textbf{2.22} ± 0.0 & 4.37 ± 0.0 & \textbf{95.6} ± 2.3 \% & \textbf{99.2} ± 0.1 \% \\
\cmark & $p_\theta(x|c)$ & \cmark & \cmark & 2.25 ± 0.0 & 4.37 ± 0.0 & 89.6 ± 0.5 \% & 97.9 ± 0.1 \% \\
\bottomrule
\end{tabular}
}
\end{table*}

We perform an ablation study over the different design choices listed in the previous section and compare TRADE against training with backward KL, reweighted samples and TSF \citep{dibak2022temperature}. The task is to learn configurations $x$ of a multidimensional well system in $d=5$ dimensions, which consists of $2^5 = 32$ basins. Configurations follow a Boltzmann distribution (\cref{eq:BoltzmannDistribution}) defined by the following energy function:
\begin{equation}
    E(x) = \sum\limits_{i=1}^{d} a x_i + b x_i^2 + c x_i^4 .
    \label{eq:MultiwellEnergy}
\end{equation}
Training data is given to all models at $T_0=1.0$, with the goal being to learn the distribution for a range $T \in [0.5, 1.0]$. 
The derivative required for training TRADE is given by \cref{eq:DerivativeBoltzmannDistribution}.

For the discretization of $T$, we choose 15 grid points equidistantly in logarithmic space and use a periodically updated exponential running average for estimating the necessary expectation values as described in \cref{sec:ExpectationValueComputation}. While the backward KL model could in principle be trained entirely data-free, for the purpose of an equal comparison it is simultaneously trained with forward KL at $T=1.0$ using the same training data that all other models receive (this can be regarded as a similar training procedure to \citet{schebek2024efficient}). 
In \cref{tab:Ablation} we report the results of the all models in terms of NLL and (relative) effective sample size (ESS) \citep{kish1965sampling}.

The results provide a few guidelines for design choices in TRADE. Firstly, choosing $p_{\operatorname{base}}(x|c) = p_\theta(x|c)$ seems to always be optimal. Secondly, when using a non-discretized loss, replacing $q(x|c)$ with an approximation by the model (as described in \cref{sec:EnergyFreeTraining}) barely impacts model performance, while it severely limits TRADE's performance when using a discretized loss. However, when using causality weights and the ground truth $q(x|c)$ the performance loss by discretizing is comparatively low. Compared to existing methods for temperature transfer TRADE outperforms all of them both in terms of NLL and ESS.

\subsection{Tempered Bayesian Inference: Two Moons}

\begin{figure*}[h]
    \centering
    \includegraphics[width=0.32\linewidth]{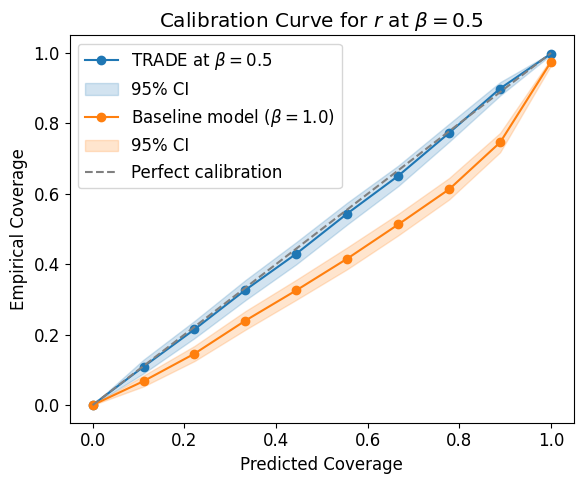}
    \includegraphics[width=0.32\linewidth]{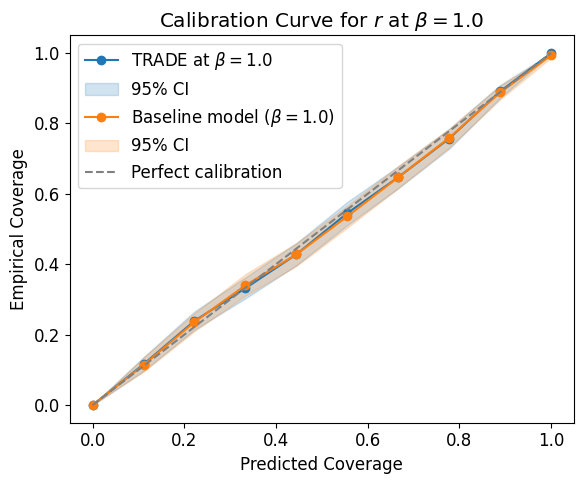}
    \includegraphics[width=0.32\linewidth]{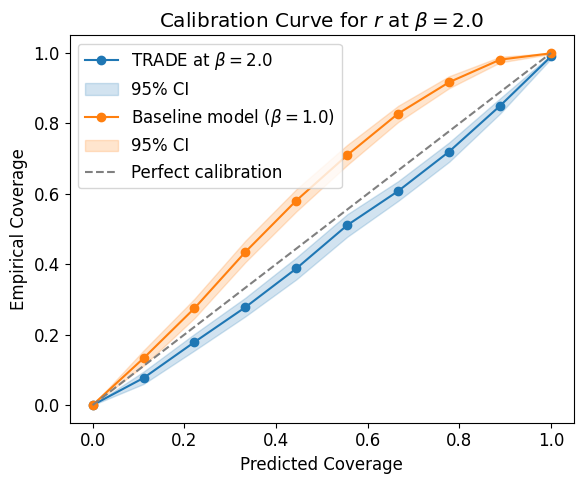}
    \caption{Calibration curves of the estimated $r$ (\cref{eq:TwoMoonsRDefinition}) at different likelihood powers $\beta$ for the two moons dataset. We compare TRADE to a model trained only at $\beta=1.0$ for $\beta \in \{0.5, 1, 2\}$. Our method successfully generalizes to different $\beta$ while maintaining the same performance at $\beta=1.0$. The model trained only at $\beta=1.0$ fails to generalize to other values of $\beta$.}
    \label{fig:TwoMoonsResults}
\end{figure*}

\label{sec:TwoMoons}
We apply TRADE in a Bayesian inference setting \citep{cranmer2020frontier}, where the task is to learn the posterior distribution of the underlying parameters of a system $\psi$ given observations $y$
\begin{equation}
    p(\psi | y) \propto p(\psi) p(y | \psi),
\end{equation}
where $p(\psi)$ is the prior and $p(y|\psi)$ is the likelihood of observation $y$ given a parameter $\psi$.
Tempered Bayesian inference learns a family of posteriors \citep{friel2008marginal} which is defined as follows:
\begin{equation}
    p(\psi | y, \beta) \propto p(\psi) p(y | \psi)^{\beta}.
    \label{eq:PowerPosterior}
\end{equation}
The parameter $\beta$ controls the trade-off between the prior and the data: $\beta=1$ recovers the standard posterior, while $\beta>1$ emphasizes the data and $\beta<1$ emphasizes the prior.
We train our model to perform tempered Bayesian inference controlled by $\beta$ on the two moons task, introduced in \citet{greenberg2019automatic}, a common benchmark for inference models. Observations $y$ given parameters $\psi$ are drawn as follows:
\begin{align}
    \label{eq:TwoMoonsRDefinition}
    & r \sim \mathcal{N}(0.1, 0.01^2) \\
    & \alpha \sim \mathcal{U}\big(-\frac{\pi}{2}, \frac{\pi}{2}\big) \\
    & y_1 = - \frac{| \psi_1 - \psi_2 |}{\sqrt{2}} + r\cos(\alpha) + 0.25 \\
    & y_2 = - \frac{- \psi_1 + \psi_2}{\sqrt{2}} + r\sin(\alpha)
    \label{eq:TwoMoonsDefinition}
\end{align}
As opposed to \citet{greenberg2019automatic}, we choose a non-uniform prior $p(\psi) = \mathcal{N}(0, 0.3^2)(\psi)$, making the task distinct from temperature scaling and preventing the application of TSF. The derivative needed for $\Ll_\text{grad}$ is the following:
\begin{equation}
    \frac{\partial}{\partial \beta}  \log q(\psi | y, \beta) \propto \log p(y | \psi)
    \label{eq:GradientTwoMoons}
\end{equation}

While we have a closed-form solution of the likelihood $\log p(y | \psi)$ in this case, in many other applications this might not be available. As discussed in \cref{sec:EnergyFreeTraining}, a key advantage of TRADE is that it can be trained without access to the ground truth density, estimating $q(x|c)$ by the normalizing flow. We showcase this capability by only using the known prior $p(\psi)$ and the learned posterior $p_\theta(\psi|y)$ to estimate the likelihood $p(y|\psi) \approx \frac{p_\theta(\psi|y)}{p(\psi)}$ at training time. 

In \cref{fig:TwoMoonsResults} we provide a comparison in terms of calibration of the estimated $r$ (\cref{eq:TwoMoonsRDefinition}) between a TRADE trained at $\beta \in [0.5, 2.0]$ compared to a model only using $\beta = 1.0$. At higher values of $\beta$, the tempered posterior of $r$ contracts and the baseline model becomes underconfident, and vice-versa. In contrast, it is apparent that TRADE learns to generalize to different $\beta$ values without incurring performance loss at $\beta = 1.0$.

\subsection{Temperature Scaling of Alanine Dipeptide}
\label{sec:ExperimentAlanineDipeptide}

We now demonstrate TRADE's ability to transfer data from molecular dynamics (MD) simulations between different temperatures. For this, we follow the setting from \citet{dibak2022temperature}. The goal is to learn the distribution of Alanine Dipeptide configurations $x$ between different temperatures $T \in [300K, 600K]$. As before, the configurations follow a Boltzmann distribution with energy $E(x)$, with the required derivative being \cref{eq:DerivativeBoltzmannDistribution}.

\Cref{tab:AlanineDipeptideResults} compares TRADE to TSF \citep{dibak2022temperature} and backward KL training. We train all models with MD simulation data at $600K$ from \citet{dibak2022temperature} and compare the resulting models at $600K$ and $300K$. For this we compute the negative log-likelihood (NLL) of the test data and the KL divergence between the 2D histograms ($50 \times 50$ bins) of model samples and test samples of the $\phi, \psi$ angles of the molecule (see \citet{dibak2022temperature} figure 3 for an illustration). 
For each method we train two models, one based on affine coupling blocks \citep{dinh2016density} and one based on the more expressive rational quadratic spline blocks \citep{durkan2019neuralsplineflows}. To construct volume-preserving versions for TSF we use the architectures given in \citet{dibak2022temperature}. For backward KL training we observe unstable training at the target temperature (especially in the affine model), while TRADE performs best with both architectures. It is worth noting that  TSF drops in performance when using spline couplings, which we suspect is caused by a high approximation error of the temperature scaled spline. In \cref{fig:AlanineDipeptidePlotSpline} we show the best of three experiments for all spline models at $T=300K$, which demonstrates TRADE's ability to extrapolate to lower temperatures accurately. Figures for affine models and other temperatures are provided in \cref{app:ExperimentalDetails}.

\begin{table*}
\centering
\resizebox{0.95\linewidth}{!}{
\begin{tabular}{ccccc}
\toprule
\multicolumn{1}{c}{Method} & \multicolumn{1}{c}{NLL $T=300K$ $\downarrow$} & \multicolumn{1}{c}{NLL $T=600K$ $\downarrow$} & \multicolumn{1}{c}{$D_{\text{hist}(\phi,\psi)}$ $T=300K$ $\downarrow$} & \multicolumn{1}{c}{$D_{\text{hist}(\phi,\psi)}$ $T=600K$ $\downarrow$} \\ \midrule
Backward KL Affine & -136.6067 ± 10.4910 & -143.5167 ± 0.3188 & 0.1726 ± 0.0599 & 0.0438 ± 0.0038\\
TSF Affine \citep{dibak2022temperature} & -159.1200 ± 0.1179 & -144.3500 ± 0.0200 & 0.1707 ± 0.0503 & \textbf{0.0331 ± 0.0040} \\ 
TRADE Affine (Ours) & \textbf{-159.8333 ± 0.0802} & \textbf{-144.4567 ± 0.0513} & \textbf{0.1402 ± 0.0290} & 0.0353 ± 0.0141 \\ \hline
Backward KL Spline & -160.7967 ± 0.4102 & -145.1533 ± 0.0737 & 0.3982 ± 0.4716 & 0.0321 ± 0.0252 \\
TSF Spline \citep{dibak2022temperature} & -157.3600 ± 0.2364 & -145.1400 ± 0.0000 & 0.1813 ± 0.0351 & 0.0200 ± 0.0019 \\
TRADE Spline (Ours) & \textbf{-161.1633 ± 0.0153} & \textbf{-145.1900 ± 0.0000} & \textbf{0.0291 ± 0.0025} & \textbf{0.0077 ± 0.0002} \\
\bottomrule
\end{tabular}
}
\caption{Comparison of different temperature scaling methods trained on MD simulations of Alanine Dipeptide at 600K. We train all models with two different coupling architectures, affine coupling blocks \citep{dinh2016density} and rational quadratic spline coupling blocks \citep{durkan2019neuralsplineflows}, using the architecture presented in \citet{dibak2022temperature} to construct an approximately temperature scalable network for TSF. We evaluate models in terms of negative log-likelihood (NLL) and KL divergence between the 2D histograms of the $\phi$, $\psi$ of model samples and MD samples both at temperatures $T=300K$ and $T=600K$. Standard deviations are taken over three independent runs with the same hyperparameters.}
\label{tab:AlanineDipeptideResults}
\end{table*}

\begin{figure*}
    \centering
    \includegraphics[width=\linewidth]{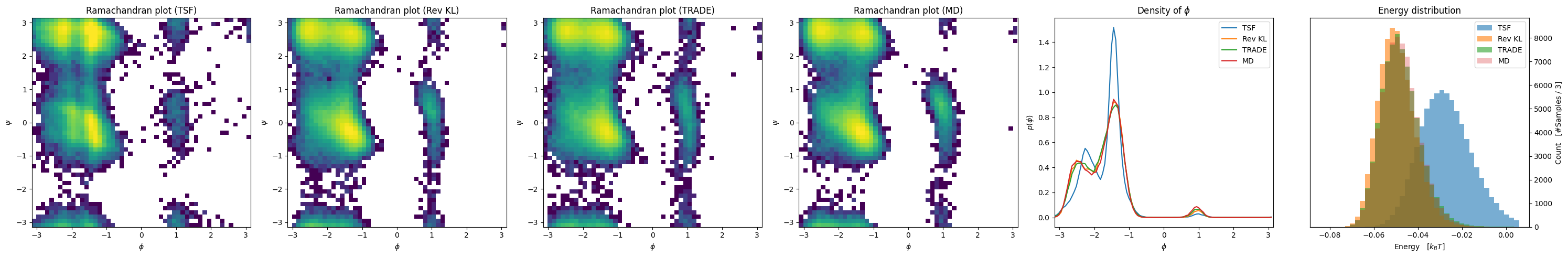}
    \caption{A comparison of TSF and backward KL training to TRADE trained on MD simulations of Alanine Dipeptide. Both models were trained at 600K and are evaluated at 300K. From left to right: Ramachandran plots of model samples (TSF, backward KL, TRADE) and molecular dynamics (MD), marginal density of the $\phi$ angle, ground truth energy of model and MD samples. For each model we plot the best result of three runs.}
    \label{fig:AlanineDipeptidePlotSpline}
\end{figure*}

\subsection{Varying External Parameters in a Scalar Field Theory}\label{sec:ExperimentScalarTheory}

\begin{figure}
    \centering
    \includegraphics[width=0.925\linewidth]{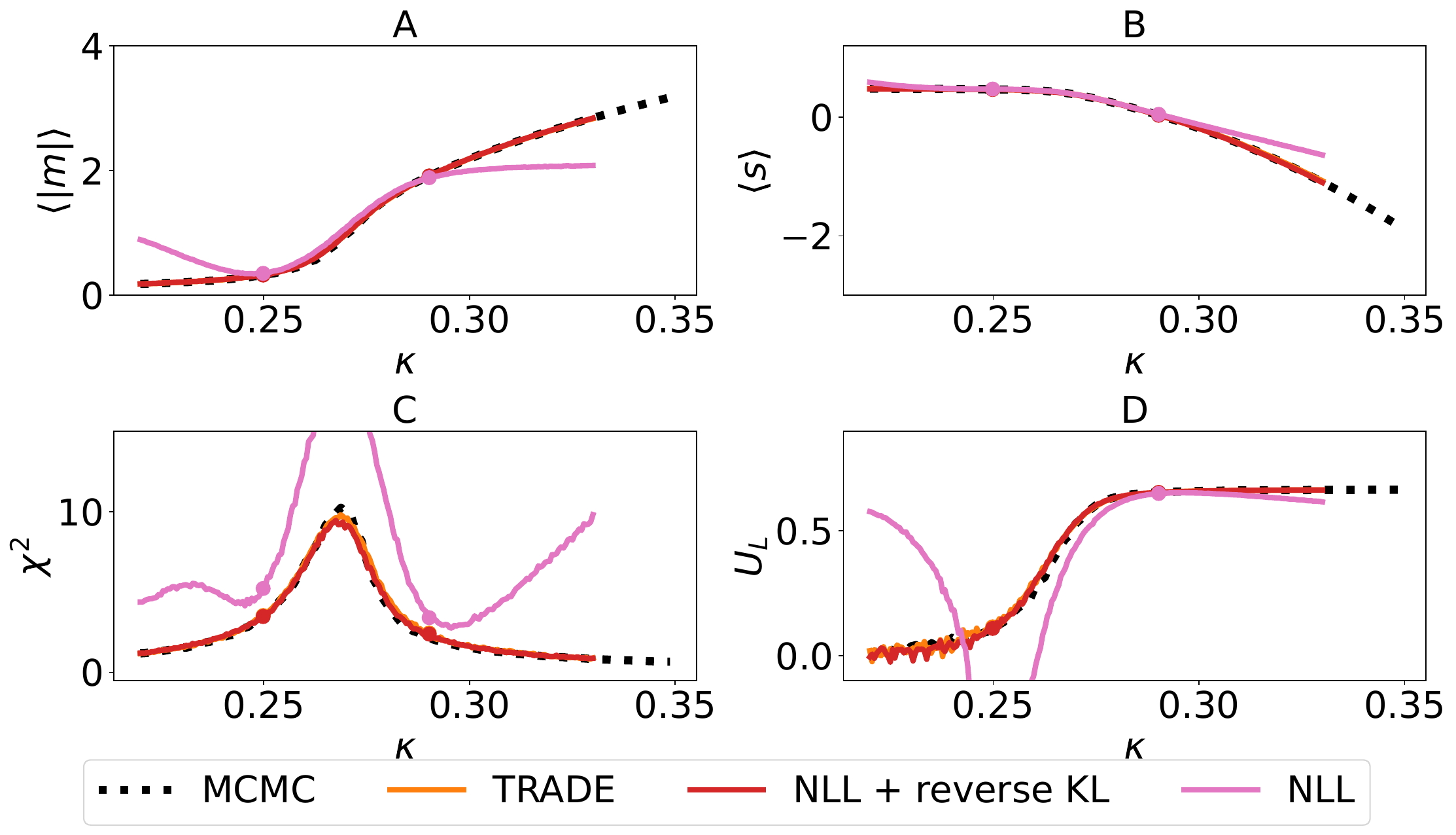}
    \caption{Physical observables for the scalar field theory. TRADE and a combination  of NLL and energy-based training accurately follow the ground truth obtained using MCMC, while NLL alone is detrimental. \textbf{A:} Expected absolute magnetization per spin. \textbf{B:} Expected ground truth action per spin. \textbf{C:} Susceptibility. \textbf{D:} Binder cumulant. The values marked with dots represent the $\kappa$ at which training data is available.}
    \label{fig:ScalarTheoryPhysicalProperties}
\end{figure}

\begin{figure}
    \centering
    \includegraphics[width=0.925\linewidth]{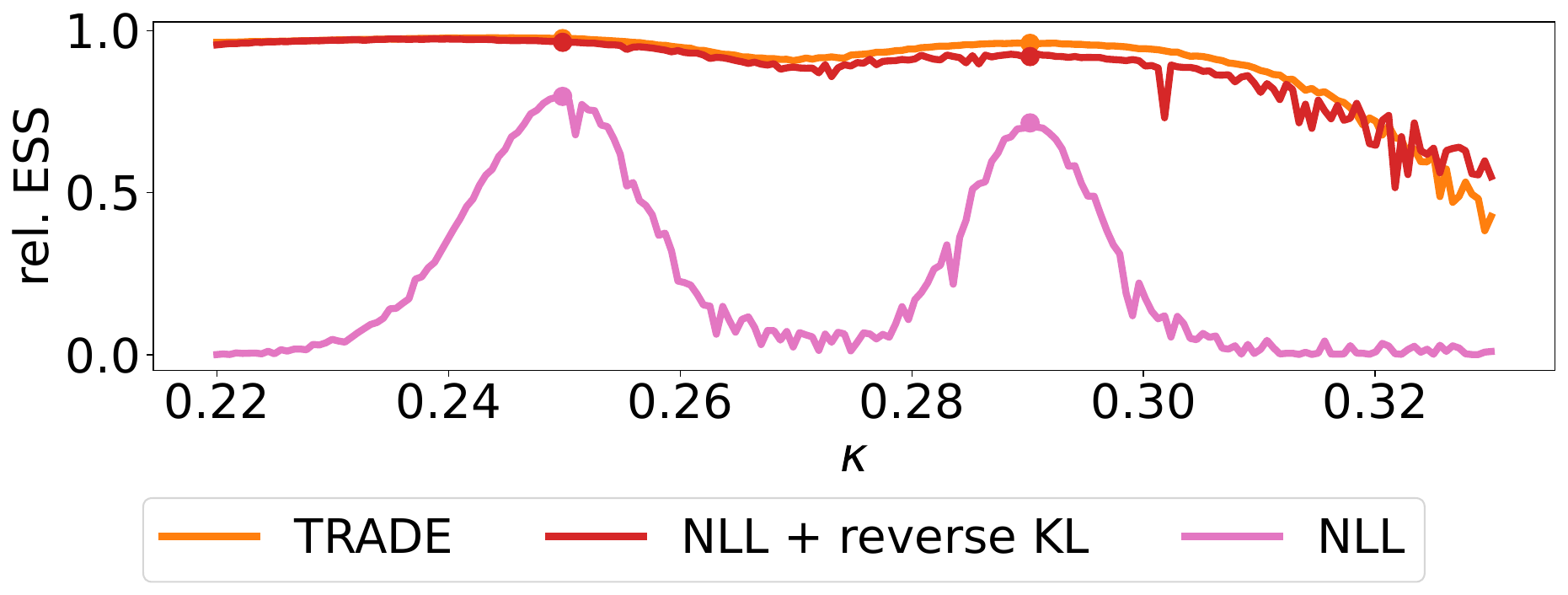}
    \caption{Relative ESS for the models trained for the scalar field theory. The values marked with dots represent the $\kappa$ at which training data is available. TRADE outperforms the baseline in most of the examined range of $\kappa$ and is also less fluctuating.}
    \label{fig:ScalarTheoryESS}
\end{figure}

In this section we apply TRADE to a physical lattice model which is for example examined by \citet{Pawlowski_2017}. The distribution of states depends on two external parameters, the quartic coupling $\lambda$ and the hopping parameter $\kappa$. The distribution of states is given by $p(\phi|\kappa,\lambda) \propto e^{-S(\phi,\lambda,\kappa)}$:
\begin{equation}\label{eq:ActionScalarTheory}
    \begin{matrix*}[l]
    S(\phi,\lambda,\kappa) =\\ \sum_x \left[- 2 \kappa \sum_{\mu = 1}^d\phi_x\phi_{x+\hat\mu}+ (1 - 2\lambda) \phi_x^2 + \lambda \phi_x^4\right].
    \end{matrix*}
\end{equation}
The first sum runs over all positions on the grid, while the second sum runs over all dimensions of the grid accounting for nearest-neighbor interactions. An interesting property of this model is that it has a second order phase transition as $\kappa$ is varied (see \citet{Pawlowski_2017}).

We fix $\lambda = 0.02$ and train a normalizing flow conditioned on $\kappa$ for $d = 2$ and a grid of size $8\times 8$. We use training data left and right of the phase transition for maximum likelihood training and interpolate the distribution between these two $\kappa$ by applying TRADE (in contrast to the previous experiments, where we only used data from one condition). As TSFs \citep{dibak2022temperature} are not applicable to this problem due to the structure of \cref{eq:ActionScalarTheory}, we use combined NLL and backward KL training as baseline. In addition, we train a model by only applying NLL training at the two $\kappa$ where data is available.

\Cref{fig:ScalarTheoryPhysicalProperties} shows some of the physical observables described in \citet{Pawlowski_2017} based on the samples following the three training schemes. The model trained with NLL only fails to reproduce the MCMC simulation. TRADE and the baseline using NLL and backward KL both accurately follow the ground truth data obtained via MCMC. \Cref{fig:ScalarTheoryESS} shows the corresponding ESS. Again, the NLL-only model falls short of the other models. TRADE and the NLL + reverse KL model perform similar at small $\kappa$, with a slight advantage for TRADE for intermediate values and for the baseline model at the highest values. Overall, the fluctuation in the ESS is smaller for TRADE than for the baseline model. Together, TRADE yields a performance gain compared to NLL training only and is competitive to the baseline. In \cref{app:ExperimentalDetails} we evaluate the stability of the training schemes.

\section{LIMITATIONS}
\label{sec:Limitations}

While TRADE yields excellent results in all our evaluations, there are several points we hope can be improved in the future. Firstly, the evaluation of the expectation value in \cref{eq:GradientBasedLossContributionTrade} may cause problems for high dimensional data sets or certain applications. Such problems may be extremely large batches, or extreme importance weights due to a bad alignment between the target and the proposal distribution. Secondly, TRADE requires an additional automatic differentiation step in each training iteration compared to other methods such as combined NLL and backward KL training, slightly increasing the computational requirements. However, none of these potential limitations posed issues in our experiments. 

\section{CONCLUSION}

In this work we introduce TRADE, a novel method that learns to transfer distributions between different external parameters. TRADE first establishes a boundary distribution at a fixed external parameter $c_0$ by learning e.g.~from i.i.d.~samples or energy-based training. It then propagates this information to other external parameters using the gradient of the unnormalized density w.r.t.~the external parameter. Previous methods in this field either had to rely on energy-based training, prone to mode collapse, or severely restrict model architecture. By leveraging the information gained from the functional form of the unnormalized density, TRADE avoids these limitations.

As a consequence, TRADE achieves excellent results in a diverse set of experiments, outperforming previous methods. We provide an extensive ablation study over design choices and benchmark against previous methods on a multidimensional well system. Our model can even be applied without access to the ground-truth unnormalized density, as demonstrated in a Bayesian inference setting. Finally, we demonstrate TRADE's ability to perform distribution transfer in complex settings such as learning configurations of Alanine Dipeptide and a scalar field theory lattice model.

\acknowledgments{This work is supported by Deutsche Forschungsgemeinschaft (DFG, German Research Foundation) under Germany's Excellence Strategy EXC-2181/1 - 390900948 (the Heidelberg STRUCTURES Cluster of Excellence). AR acknowledges funding from the Carl-Zeiss-Stiftung and by the German Federal Ministery of Education and Research (BMBF) (project EMUNE/031L0293A). HS acknowledges financial support by the German Research Foundation (DFG) through the Research Training Group 2450 “Tailored Scale-Bridging Approaches to Computational Nanoscience”. UK thanks the Klaus Tschira Stiftung for their support via the SIMPLAIX project. The authors acknowledge support by the state of Baden-Württemberg through bwHPC and the German Research Foundation (DFG) through grant INST 35/1597-1 FUGG.}

\newpage
\bibliography{references}
\bibliographystyle{apalike}

\section*{Checklist}

 \begin{enumerate}
 
 \item For all models and algorithms presented, check if you include:
 \begin{enumerate}
   \item A clear description of the mathematical setting, assumptions, algorithm, and/or model. Yes, we provide this in \cref{sec:Method}.
   \item An analysis of the properties and complexity (time, space, sample size) of any algorithm. Yes, we provide a comparison in \cref{sec:Limitations}.
   \item (Optional) Anonymized source code, with specification of all dependencies, including external libraries. No, but we will publish code upon acceptance.
 \end{enumerate}

 \item For any theoretical claim, check if you include:
 \begin{enumerate}
   \item Statements of the full set of assumptions of all theoretical results. Yes, in \cref{sec:Method}.
   \item Complete proofs of all theoretical results. Yes, in \cref{sec:Method} and \cref{app:Proofs}.
   \item Clear explanations of any assumptions. Yes.
 \end{enumerate}

 \item For all figures and tables that present empirical results, check if you include:
 \begin{enumerate}
   \item The code, data, and instructions needed to reproduce the main experimental results (either in the supplemental material or as a URL). Yes, we list hyperparameters, used datasets and settings in appendix \cref{app:ExperimentalDetails} and will provide code upon acceptance.
   \item All the training details (e.g., data splits, hyperparameters, how they were chosen). Yes, in \cref{app:ExperimentalDetails}
         \item A clear definition of the specific measure or statistics and error bars (e.g., with respect to the random seed after running experiments multiple times). Yes, a description is provided with each experimental section and \cref{app:ExperimentalDetails}.
         \item A description of the computing infrastructure used. (e.g., type of GPUs, internal cluster, or cloud provider). Yes, in \cref{app:ExperimentalDetails}.
 \end{enumerate}

 \item If you are using existing assets (e.g., code, data, models) or curating/releasing new assets, check if you include:
 \begin{enumerate}
   \item Citations of the creator If your work uses existing assets. Yes, datasets are correctly attributed.
   \item The license information of the assets, if applicable. Yes, licenses are provided at the dataset sources.
   \item New assets either in the supplemental material or as a URL, if applicable. Not Applicable.
   \item Information about consent from data providers/curators. Not Applicable, all dataset used are publicly available.
   \item Discussion of sensible content if applicable, e.g., personally identifiable information or offensive content. Not Applicable.
 \end{enumerate}

 \item If you used crowdsourcing or conducted research with human subjects, check if you include:
 \begin{enumerate}
   \item The full text of instructions given to participants and screenshots. Not Applicable.
   \item Descriptions of potential participant risks, with links to Institutional Review Board (IRB) approvals if applicable. Not Applicable.
   \item The estimated hourly wage paid to participants and the total amount spent on participant compensation. Not Applicable.
 \end{enumerate}

 \end{enumerate}

\onecolumn
\appendix

\aistatstitle{Supplementary Materials}

\section{PROOFS}
\label{app:Proofs}
\subsection{Uniqueness of the Solution of the PDE used in the TRADE Objective}
\label{app:UniquenessProof}

In \cref{sec:ConditionalDistributionAsPDE} we describe the PDE on which TRADE is based, as outlined in \cref{eq:BoundaryDensity,eq:ConditionGradient}. To ensure that TRADE converges to the true solution, i.e. the ground truth $p(x|c)$ it remains to be shown that the boundary value problem \cref{eq:BoundaryDensity,eq:ConditionGradient} has a unique solution and that this solution corresponds to the ground truth. We begin by rewriting \cref{eq:BoundaryDensity,eq:ConditionGradient} in terms of the logarithm and show that this boundary value problem has a unique solution which is given by $\log p(x|c)$. The motivation behind switching to the logarithmic form is that our practical implementation is based on this version, as it is easier to implement and provides more numerical stability.

\begin{theorem}\label{thm:UniqueSolutionPDE}
    Let $p(x|c)$ be a conditional probability density with a continuous condition $c$ which is differentiable with respect to $c$. Then, the initial value problem 

    \begin{align}
        \log p_{\theta}(x|c_0) = \log p(x|c_0)\;\;\;\forall x\label{eq:Appendix-Boundary-Condition}\\
        \frac{\partial \log p_{\theta}(x|c)}{\partial c} = \frac{\partial \log p(x|c)}{\partial c}\;\;\; \forall x,c\label{eq:Appendix-Gradient-Condition}
    \end{align}

    has a unique solution $\log p_{\theta}(x|c)$, which is equal to $\log p(x|c)$.
\end{theorem}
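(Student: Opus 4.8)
The plan is to observe that, although \cref{eq:Appendix-Boundary-Condition,eq:Appendix-Gradient-Condition} is written as a PDE in the joint variable $(x,c)$, it decouples completely over $x$: for each fixed $x$ it is an ordinary initial value problem in the single variable $c$ of the trivial form $u'(c) = g(c)$ with $u(c_0) = u_0$, where $g(c) = \frac{\partial}{\partial c}\log p(x|c)$ is a prescribed function that does not depend on the unknown. Such an IVP has a unique solution, obtained by integration, so the entire argument amounts to the fundamental theorem of calculus applied pointwise in $x$.

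Concretely, I would first note that $\log p(x|c)$ is itself a solution — it satisfies \cref{eq:Appendix-Boundary-Condition} trivially and \cref{eq:Appendix-Gradient-Condition} by reflexivity — so existence is immediate and the real content of the theorem is uniqueness. For uniqueness, fix an arbitrary $x$, let $\log p_\theta(x|\cdot)$ be any solution, and set $h(c) := \log p_\theta(x|c) - \log p(x|c)$ on the interval $C = [c_{\text{min}}, c_{\text{max}}]$. By \cref{eq:Appendix-Gradient-Condition}, $h$ is differentiable with $h'(c) = 0$ for every $c \in C$; since $C$ is an interval, the mean value theorem forces $h$ to be constant. By \cref{eq:Appendix-Boundary-Condition}, $h(c_0) = 0$, hence $h \equiv 0$, i.e. $\log p_\theta(x|c) = \log p(x|c)$ for all $c$. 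As $x$ was arbitrary, this holds for all $x$, which simultaneously establishes uniqueness and identifies the unique solution as $\log p(x|c)$. Equivalently, one can exhibit the closed form $\log p_\theta(x|c) = \log p(x|c_0) + \int_{c_0}^{c} \frac{\partial}{\partial s}\log p(x|s)\,ds$ and invoke the fundamental theorem of calculus to recognize the right-hand side as $\log p(x|c)$; any solution must coincide with this expression.

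Since the argument is this elementary, there is no serious obstacle; the only points requiring care are hypotheses rather than hard steps. One needs the admissible condition set to be connected (an interval), so that ``vanishing derivative implies constant'' is valid, and one needs $\log p(x|c)$ — hence the difference $h$ — to be differentiable in $c$ throughout that interval; both are assumed in the statement. A further, more pedantic point is that the theorem does not presuppose that an arbitrary solution $\log p_\theta(x|c)$ is the logarithm of a normalized density: normalization is never used, and instead falls out automatically, since the unique solution turns out to be $\log p(x|c)$, which is normalized by construction. Finally, when relating this abstract statement to the training objective one additionally rewrites the right-hand side of \cref{eq:Appendix-Gradient-Condition} via \cref{thm:ConditionGradientUnormalized}, so that only the tractable $\frac{\partial}{\partial c}\log q(x|c)$ and an expectation under $p(x|c)$ appear, but this reformulation is not needed for the uniqueness claim itself.
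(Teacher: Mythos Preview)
Your proposal is correct and essentially matches the paper's proof: both reduce the problem, for each fixed $x$, to showing that a function of $c$ with vanishing derivative and a prescribed value at $c_0$ is identically zero, invoking either the fundamental theorem of calculus via the integral representation (which the paper writes out explicitly and you mention as an equivalent alternative) or, equivalently, the mean value theorem on the difference function. The only cosmetic distinction is that the paper compares two arbitrary solutions and then separately verifies that $\log p(x|c)$ is one of them, whereas you compare an arbitrary solution directly against $\log p(x|c)$; the mathematical content is identical.
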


\begin{proof}

We assume that we have two solutions $\log p_\theta(x|c)$ and $\log \tilde p_\theta(x|c)$ of the boundary value problem \cref{eq:Appendix-Boundary-Condition,eq:Appendix-Gradient-Condition}. Therefore, it follows that

\begin{equation}\label{eq:Appendix-Two-Solutions-PDE-Boundary-Condition}
    \log p_\theta(x|c_0) = \log \tilde p_\theta(x|c_0) = \log p(x|c_0)\;\;\;\forall x
\end{equation}

\begin{equation}\label{eq:Appendix-Two-Solutions-PDE-Gradient-Condition}
    \frac{\partial  \log p_\theta(x|c)}{\partial c} = \frac{\partial  \log \tilde p_\theta(x|c)}{\partial c} = \frac{\partial \log p(x|c)}{\partial c}\;\;\;\forall x,c.
\end{equation}

We are now interested in integrating \cref{eq:Appendix-Two-Solutions-PDE-Gradient-Condition} with respect to $c$. Without loss of generality, we consider the interval $[c_0,c]$. For $\log p_{\theta}(x|c)$ we find 

\begin{align}
    \log p_{\theta}(x|c) &= \log p_{\theta}(x|c_0) + \int_{c_0}^c \frac{\partial \log p(x|c')}{\partial c} dc' \label{eq:AppendixUniversalSolutionReweritten1}\\
    &\overset{(*)}{=} \log p(x|c_0) + \int_{c_0}^c \frac{\partial \log p(x|c')}{\partial c} dc'. \label{eq:AppendixUniversalSolutionReweritten2}
\end{align}

In $(*)$, we used the fact that $\log p_{\theta}(x|c)$ satisfies the boundary condition \cref{eq:Appendix-Boundary-Condition}. The right hand side of \cref{eq:AppendixUniversalSolutionReweritten2} only depends on the ground truth $\log p(x|c)$, and no longer on $\log p_\theta(x|c)$. Computing $\log \tilde p_{\theta}(x|c)$ analogously to \cref{eq:AppendixUniversalSolutionReweritten1,eq:AppendixUniversalSolutionReweritten1} we define the difference function \cref{eq:AppendixUniversalSolutionDifferenceFunction}.

\begin{equation}
        f(x,c) = \log p_{\theta}(x|c) - \log \tilde p_{\theta}(x|c)\label{eq:AppendixUniversalSolutionDifferenceFunction}
\end{equation}
\newpage

By inserting \cref{eq:AppendixUniversalSolutionReweritten2} and the corresponding result for $\log \tilde p_\theta(x|c)$ into \cref{eq:AppendixUniversalSolutionDifferenceFunction} we find

\begin{align}
    f(x,c) &= \log p(x|c_0) + \int_{c_0}^c \frac{\partial \log p(x|c')}{\partial c} dc' - \log p(x|c_0) - \int_{c_0}^c \frac{\partial \log p(x|c')}{\partial c} dc'\\
    &= 0.\label{eq:Appendix-Difference-function-is-zero}
\end{align}

\Cref{eq:Appendix-Difference-function-is-zero} directly shows that $\log p_\theta(x|c) \equiv \log \tilde p_\theta(x|c)$. Therefore, the boundary value problem \cref{eq:Appendix-Boundary-Condition,eq:Appendix-Gradient-Condition} has a unique solution.

To complete the proof, it is left to show, that the ground truth $\log p(x|c)$ is a solution of \cref{eq:Appendix-Boundary-Condition,eq:Appendix-Gradient-Condition}. This follows trivially from the definition of the boundary value problem. Combining this with the uniqueness of the solution, we conclude that \cref{eq:Appendix-Boundary-Condition,eq:Appendix-Gradient-Condition} have a unique solution, which is indeed the ground truth $\log p(x|c)$.
\end{proof}

\Cref{thm:UniqueSolutionPDE} proves that the only solution to \cref{eq:Appendix-Boundary-Condition,eq:Appendix-Gradient-Condition} is the logarithm of ground truth $p(x|c)$. However, \cref{eq:BoundaryDensity,eq:ConditionGradient} are formulated directly in terms of the distributions, rather than their logarithms. Nonetheless, the two formulations of the boundary value problem are equivalent, as demonstrated by the following theorem:

\begin{theorem}\label{thm:EquilvalenceBothPDeFormulations}

Let $\log p_{\theta}(x|c)$ be a solution of the boundary value problem \cref{eq:Appendix-Boundary-Condition,eq:Appendix-Gradient-Condition}. Then, $p_{\theta}(x|c) \coloneq e^{\log p_{\theta}(x|c)}$ is a solution of the boundary value problem

\begin{align}
    p_{\theta}(x|c_0) &= p(x|c_0)\;\;\;\forall x\label{eq:AppendixBVPFull1}\\
    \frac{\partial p_{\theta}(x|c) }{\partial c} &= \frac{\partial p(x|c) }{\partial c}\;\;\;\forall x,c.\label{eq:AppendixBVPFull2}
\end{align}

as introduced in \cref{eq:BoundaryDensity,eq:ConditionGradient} in \cref{sec:ConditionalDistributionAsPDE} of the main text.
\end{theorem}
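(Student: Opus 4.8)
The plan is to verify directly that $p_{\theta}(x|c) := e^{\log p_{\theta}(x|c)}$ satisfies the two conditions \cref{eq:AppendixBVPFull1,eq:AppendixBVPFull2} of the density-form boundary value problem, using the two hypotheses \cref{eq:Appendix-Boundary-Condition,eq:Appendix-Gradient-Condition} together with \cref{thm:UniqueSolutionPDE}. For the boundary condition \cref{eq:AppendixBVPFull1}, I would simply exponentiate \cref{eq:Appendix-Boundary-Condition}: since $\log p_{\theta}(x|c_0) = \log p(x|c_0)$ for all $x$, applying $\exp$ to both sides gives $p_{\theta}(x|c_0) = e^{\log p(x|c_0)} = p(x|c_0)$. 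For the gradient condition \cref{eq:AppendixBVPFull2}, the key tool is the chain rule: $\frac{\partial}{\partial c} p_{\theta}(x|c) = e^{\log p_{\theta}(x|c)}\, \frac{\partial}{\partial c}\log p_{\theta}(x|c) = p_{\theta}(x|c)\, \frac{\partial}{\partial c}\log p_{\theta}(x|c)$, which is well defined because $\log p_{\theta}$ is differentiable in $c$ and $\exp$ is smooth.

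It then remains to rewrite the right-hand side in terms of the ground truth. By \cref{eq:Appendix-Gradient-Condition}, $\frac{\partial}{\partial c}\log p_{\theta}(x|c) = \frac{\partial}{\partial c}\log p(x|c)$; and by \cref{thm:UniqueSolutionPDE}, the solution of the log-form problem is unique and equals $\log p(x|c)$, hence $p_{\theta}(x|c) = e^{\log p_{\theta}(x|c)} = e^{\log p(x|c)} = p(x|c)$ for all $x,c$. Substituting both identities yields $\frac{\partial}{\partial c} p_{\theta}(x|c) = p(x|c)\, \frac{\partial}{\partial c}\log p(x|c) = \frac{\partial}{\partial c} p(x|c)$, which is exactly \cref{eq:AppendixBVPFull2}. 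The reverse implication (a solution of the density-form problem yielding one of the log-form problem) would follow by the analogous computation with $\log p_\theta := \log p_\theta(x|c)$ and the chain rule applied to $\log$, giving the equivalence alluded to in the surrounding text.

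Since \cref{thm:UniqueSolutionPDE} already pins down $\log p_{\theta}$, this argument is essentially bookkeeping rather than a substantive proof; the only point that requires a moment of care is the regularity needed to differentiate $e^{\log p_{\theta}(x|c)}$ via the chain rule and to write $\frac{\partial}{\partial c} p(x|c) = p(x|c)\,\frac{\partial}{\partial c}\log p(x|c)$, both of which are covered by the standing assumption that $p$, and hence the solution $p_{\theta}$, is differentiable in $c$ (and strictly positive, so that $\log$ is defined). If one prefers a self-contained argument that does not route through the uniqueness theorem, I would instead integrate \cref{eq:Appendix-Gradient-Condition} from $c_0$ to $c$ and exponentiate, exactly as in the proof of \cref{thm:UniqueSolutionPDE}, to first obtain $p_{\theta}(x|c) = p(x|c)$, and then apply the chain rule as above; the only obstacle in that variant is reproducing the fundamental-theorem-of-calculus step, which is routine.
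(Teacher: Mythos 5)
Your proposal is correct and follows essentially the same route as the paper's proof: exponentiate the boundary condition, apply the chain rule to $e^{\log p_\theta(x|c)}$, and invoke \cref{thm:UniqueSolutionPDE} to identify $\log p_\theta \equiv \log p$ so that both the prefactor and the log-derivative can be replaced by their ground-truth counterparts. The only cosmetic difference is that the paper substitutes $\log p_\theta \equiv \log p$ in a single step where you split it into two substitutions, and your appeal to the hypothesis \cref{eq:Appendix-Gradient-Condition} for the log-derivative is in fact subsumed by that identification.
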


\begin{proof}

    To prove \cref{thm:EquilvalenceBothPDeFormulations}, we insert $p_{\theta}(x|c) \coloneq e^{\log p_{\theta}(x|c)}$ into \cref{eq:AppendixBVPFull1,eq:AppendixBVPFull2}. Starting with \cref{eq:AppendixBVPFull1} we find:

    \begin{align}
        p_{\theta}(x|c_0) &= e^{\log p_{\theta}(x|c)}\\
        &\overset{(*)}{=}e^{\log p(x|c)}\\
        &= p(x|c_0).
    \end{align}

    For \cref{eq:AppendixBVPFull2} we compute:

    \begin{align}
        \frac{\partial p_\theta(x|c)}{\partial c} &= \frac{\partial e^{\log p_\theta(x|c)}}{\partial c}\\
        &= e^{\log p_\theta(x|c)} \frac{\partial \log p_\theta(x|c)}{\partial c}\\
        &\overset{(*)}{=} e^{\log p(x|c)} \frac{\partial \log p(x|c)}{\partial c}\\
        &= \frac{\partial e^{\log p(x|c)}}{\partial c}\\
        &= \frac{\partial p(x|c)}{\partial c}
    \end{align}

    In $(*)$, we use the fact that $\log p_{\theta}(x|c)$ is a solution of the boundary value problem \cref{eq:Appendix-Boundary-Condition,eq:Appendix-Gradient-Condition}, which implies $\log p_{\theta}(x|c) \equiv \log p(x|c)$.
\end{proof}

\begin{rem}
    The uniqueness of the solution of the boundary value problem \cref{eq:AppendixBVPFull1,eq:AppendixBVPFull2} can be shown analogously to the proof of  \cref{thm:UniqueSolutionPDE}. By combining (a)~the fact that $\log p(x|c)$ is the unique solution of \cref{eq:Appendix-Boundary-Condition,eq:Appendix-Gradient-Condition} and (b)~the result of \cref{thm:EquilvalenceBothPDeFormulations} with this uniqueness of the solution of \cref{eq:AppendixBVPFull1,eq:AppendixBVPFull2}, we conclude that $p(x|c)$ is the unique solution to \cref{eq:AppendixBVPFull1,eq:AppendixBVPFull2}.
\end{rem}

\subsection{Interchanging Derivative and Integration}\label{app:SwapDerivativeIntegral}

\Cref{thm:ConditionGradientUnormalized} in the main text is crucial to TRADE, as it allows for computing the derivative of the logarithm of the normalized density with respect to the condition in terms of the known unnormalized density. In the proof of \cref{thm:ConditionGradientUnormalized}, the derivative with respect to the condition is interchanged with the integral over the support of the density. In this section, we provide a theoretical justification for this interchange. 

\begin{theorem}\label{thm:Interchange-Of-Integral-And-Derivative}

Let $f(x,c)$ be a integrable function with a continuous condition $c$. We assume that $f(x,c)$ is differential with respect to $c$. For convenience, we define $I(c) \coloneq \int f(x,c) dx$. Furthermore, we assume that $\frac{\partial I(c)}{\partial c}$ exists. If one can construct a function $g(x,c)$ such that $|g_n(x,c)| \leq g(x,c)\;\forall x$ with $g_n(x,c) \coloneq \frac{f(x,c + h_n) - f(x,c)}{h_n}$, $h_n\rightarrow 0$ as $n\rightarrow\infty$, then 

\begin{equation}
    \frac{\partial }{\partial c} I(c) = \int  \frac{\partial }{\partial c} f(x,c) dx
\end{equation}

holds true.
\end{theorem}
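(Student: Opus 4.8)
This is the standard ``differentiation under the integral sign'' theorem, and the plan is to reduce it to the dominated convergence theorem applied to the difference quotients. First I would fix a condition $c$ and a sequence $h_n \to 0$, and write the difference quotient of $I$ as
\begin{equation}
    \frac{I(c+h_n) - I(c)}{h_n} = \int \frac{f(x, c+h_n) - f(x,c)}{h_n}\, dx = \int g_n(x,c)\, dx,
\end{equation}
which is legitimate by linearity of the integral, using that both $f(\cdot, c+h_n)$ and $f(\cdot, c)$ are integrable. Pointwise in $x$, the differentiability of $f$ in $c$ gives $g_n(x,c) \to \frac{\partial}{\partial c} f(x,c)$ as $n \to \infty$. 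The hypothesis supplies an integrable dominating function $g$ with $|g_n(x,c)| \le g(x,c)$ for all $x$ and all $n$, so the dominated convergence theorem applies to the sequence $(g_n(\cdot, c))_n$, yielding
\begin{equation}
    \lim_{n\to\infty} \int g_n(x,c)\, dx = \int \lim_{n\to\infty} g_n(x,c)\, dx = \int \frac{\partial}{\partial c} f(x,c)\, dx.
\end{equation}

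Finally I would identify the left-hand side with $\frac{\partial}{\partial c} I(c)$: since $\frac{\partial}{\partial c} I(c)$ is assumed to exist, the limit of $(I(c+h_n) - I(c))/h_n$ is $\frac{\partial}{\partial c} I(c)$ for every sequence $h_n \to 0$, and combining this with the previous display gives the claim. One should note in passing that the limit on the right is automatically finite and the right-hand integrand is therefore integrable, which is consistent with the conclusion.

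The only subtlety worth flagging is bookkeeping around the sequential formulation: the theorem as stated is phrased for a fixed null sequence $h_n$, so strictly it establishes the identity along that sequence; to get the full derivative one invokes that the existence of $\frac{\partial}{\partial c}I(c)$ (also assumed) forces the same value along every such sequence. There is no real obstacle here — the substance is entirely carried by dominated convergence, and the hypothesis has been tailored precisely so that its application is immediate; the ``hard part'' is merely stating the measure-theoretic prerequisites cleanly rather than proving anything deep.
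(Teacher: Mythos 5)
Your proposal is correct and follows essentially the same route as the paper's proof: rewrite the difference quotient of $I$ as the integral of $g_n$ by linearity, note the pointwise convergence $g_n \to \partial f/\partial c$, and conclude by the Lebesgue dominated convergence theorem using the assumed dominating function $g$. The remark about the sequential formulation is a fair observation but does not alter the substance of the argument.
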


Before we prove \cref{thm:Interchange-Of-Integral-And-Derivative}, we first cite a result concerning the interchange of limits and integrals:

\begin{theorem}[The Lebesgue Dominated Convergence Theorem, Proposition 6 of \cite{7TheIntegralofUnboundedFunctions})]\label{thm:LebesgueDominatedConvergenceTheorem}
If $\{f_n\}$ is a sequence of measurable functions and $f_n\rightarrow f$ a.e. on a set $S$ and there is an integrable function $g$ on $S$ such that $\left|f_n\right|\leq g$ for all $n$, then $f$ is integrable and $\int_S f_n\rightarrow \int_Sf$.
\end{theorem}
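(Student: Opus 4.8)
The plan is to derive the statement from Fatou's lemma, the standard route to the Dominated Convergence Theorem. First I would settle the integrability of the limit $f$: since $f_n \to f$ a.e.\ on $S$ and $|f_n| \le g$ for every $n$, passing to the pointwise limit gives $|f| \le g$ a.e. As $f$ is measurable (being an a.e.\ limit of measurable functions) and is dominated by the integrable function $g$, it follows that $\int_S |f| \le \int_S g < \infty$, so $f$ is integrable. This is the easy half of the conclusion.

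For the convergence of the integrals, the key device is to apply Fatou's lemma to two auxiliary \emph{nonnegative} sequences. The domination hypothesis $|f_n| \le g$ is precisely what makes $g + f_n \ge 0$ and $g - f_n \ge 0$ on $S$. Applying Fatou's lemma to $\{g + f_n\}$, and using that $g + f_n \to g + f$ a.e., I obtain
\begin{equation}
\int_S g + \int_S f \le \liminf_{n} \int_S (g + f_n) = \int_S g + \liminf_{n} \int_S f_n,
\end{equation}
and cancelling the finite term $\int_S g$ gives $\int_S f \le \liminf_n \int_S f_n$. Applying Fatou's lemma instead to $\{g - f_n\}$ yields
\begin{equation}
\int_S g - \int_S f \le \liminf_{n} \int_S (g - f_n) = \int_S g - \limsup_{n} \int_S f_n,
\end{equation}
which rearranges to $\limsup_n \int_S f_n \le \int_S f$. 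Chaining these produces the sandwich $\limsup_n \int_S f_n \le \int_S f \le \liminf_n \int_S f_n$; since $\liminf \le \limsup$ in general, all three are equal and $\lim_n \int_S f_n = \int_S f$.

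The hard part is not the calculation but the prerequisite: the whole argument rests on Fatou's lemma, which in turn is usually obtained from the Monotone Convergence Theorem, so a fully self-contained treatment would have to build that tower first. Beyond that, the only points needing care are (i) that cancelling $\int_S g$ is legitimate, which requires $g$ to be integrable so that no $\infty - \infty$ ambiguity arises, and (ii) that the identities $g \pm f_n \to g \pm f$ hold only almost everywhere, so I would note that discarding the exceptional null set changes none of the integrals. With these dispatched, the two Fatou inequalities close the proof.
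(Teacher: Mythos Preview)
Your argument is the standard and correct proof of the Dominated Convergence Theorem via Fatou's lemma; the two auxiliary nonnegative sequences $g\pm f_n$ are exactly the right device, and your attention to the finiteness of $\int_S g$ (so that cancellation is legitimate) and to the a.e.\ nature of the convergence is appropriate.

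There is, however, nothing to compare against: the paper does not prove this theorem. It merely \emph{cites} the Dominated Convergence Theorem as a known result from the literature (``Proposition~6 of \cite{7TheIntegralofUnboundedFunctions}'') in order to use it as a tool in the proof of the subsequent theorem on interchanging derivative and integral. So your proposal supplies a proof where the paper intentionally gives none; the Fatou-based route you outline is the canonical textbook approach and would be entirely acceptable as a self-contained justification.
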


We are now ready to prove \cref{thm:Interchange-Of-Integral-And-Derivative}:

\begin{proof}
    Since $\frac{\partial I(c)}{\partial c}$ is assumed to exist, we can rewrite it using its definition:

    \begin{equation}
        \frac{\partial I(c)}{\partial c} = \lim_{n\rightarrow\infty} \frac{I(c + h_n) - I(c)}{h_n}
    \end{equation}

    Here, $\{h_n\}_{n\in\mathbb{N}}$ is a sequence in $\mathbb{R}$ with $h_n\rightarrow 0$ as $n\rightarrow\infty$. By exploiting the linearity of the integral, we find

    \begin{align}
        \frac{\partial I(c)}{\partial c} &= \lim_{n\rightarrow\infty} \frac{I(c + h_n) - I(c)}{h_n}\\
        &= \lim_{n\rightarrow\infty} \int \frac{f(x,c + h_n) - f(x,c)}{h_n} dx\\
        &= \lim_{n\rightarrow\infty} \int g_n(x,c)dx
    \end{align}

    where $g_n(x,c) \coloneq \frac{f(x,c + h_n) - f(x,c)}{h_n}$.
    
    Since $\frac{\partial f(x,c)}{\partial c}$ is assumed to exist, by the definition of the derivative, we know that $g_n(x,c)\rightarrow \frac{\partial f(x,c)}{\partial c}$ as $n \rightarrow \infty$. If we can construct a function $g(x,c)$ such that $|g_n(x,c)| \leq g(x,c)\;\forall x$, then applying \cref{thm:LebesgueDominatedConvergenceTheorem} concludes the proof.
\end{proof}

The proof of \cref{thm:Interchange-Of-Integral-And-Derivative} relies on the existence of a function $g$ that bounds $|g_n(x,\beta)|$. In \cref{example:Upper-Bound-Temperature-Scaling}, we construct such an upper bound for the case of temperature scaling, as for applied in \cref{sec:Ablation,sec:ExperimentAlanineDipeptide} of the main text.

\begin{example}\label{example:Upper-Bound-Temperature-Scaling}

In the case of temperature scaling, the function $f(x,c)$ used in \cref{thm:Interchange-Of-Integral-And-Derivative} can be identified with the unnormalized Boltzmann distribution $q(x|T) = e^{- \frac{E(x)}{k_BT}}$. For the sake of notation, we switch to the generalized inverse temperature $\beta \coloneq \frac{1}{k_B T}$, leading to $q(x|\beta) = e^{- \beta E(x)}$. We assume that the expectation value of the energy $\mathbb{E}_{p(x|\beta)}\left[E(x)\right]$ exists. This is a reasonable assumption, as the expected energy is a common observable in many applications. We can now define $g_n(x,\beta)$ as follows:

\begin{align}
    g_n(x,\beta) &= \frac{e^{-(\beta + h_n)\cdot E(x)} - e^{-\beta\cdot E(x)}}{h_n}\\
    &= e^{-\beta\cdot E(x)} \frac{e^{-h_n\cdot E(x)} - 1}{h_n}.
\end{align}

Let's assume without loss of generality, that $E(x)>0$ and $h_n > 0$. Under this assumption, we find 

\begin{equation}
    |g_n(x,\beta)| = e^{-\beta\cdot E(x)} \frac{1 - e^{-h_n\cdot E(x)}}{h_n}.
\end{equation}

To show that $g(x,\beta) = E(x)\cdot e^{-\beta\cdot  E(x)}$ is an upper bound for $|g_n(x,\beta)|$, it is sufficient to prove that $\frac{1 - e^{-h_n\cdot E(x)}}{h_n} \leq E(x)$. The function $\frac{1 - e^{-h_n\cdot E(x)}}{h_n} \leq E(x)$ is strictly monotonically decreasing in $h_n$ for $h_n > 0$. Therefore, the global maximum has to be attained for $h_n \rightarrow 0$. Using L'H\^{o}spitals rule, we find \cref{eq:Justification-L-Hospital}.

\begin{align}
    \lim_{n\rightarrow\infty}\frac{1 - e^{-h_n\cdot E(x)}}{h_n} &= \lim_{n\rightarrow\infty}\frac{E(x)\cdot e^{-h_n\cdot E(x)}}{1}\\
    &= E(x)\label{eq:Justification-L-Hospital}
\end{align}

In conclusion, for temperature scaling, $g(x,\beta) = E(x)\cdot e^{-\beta E(x)}$  serves as a valid upper bound for $|g_n(x,\beta)|$ used in the proof of \cref{thm:Interchange-Of-Integral-And-Derivative}.

\end{example}
\section{EXPERIMENTAL DETAILS}
\label{app:ExperimentalDetails}

\subsection{Loss Balancing}
\label{app:LossBalancing}

In the full TRADE objective in \cref{eq:FullLoss}, the two loss contributions $\mathcal{L}_{\text{boundary}}$ and $\mathcal{L}_{\text{grad}}$, are balanced by a hyperparameter $\lambda$. To ensure that both contributions are minimized equally during each update step, we do not use a fixed $\lambda$ throughout training. Instead, $\lambda$  is computed adaptively. This computation scheme is based on the loss balancing scheme described in \cite{wang2023expertsguidetrainingphysicsinformed}.

\begin{align}
    \hat \lambda_{\text{boundary}} &= \frac{||\nabla_\theta\mathcal{L}_{\text{boundary}}(\theta)|| + ||\nabla_\theta\mathcal{L}_{\text{grad}}(\theta)||}{||\nabla_\theta\mathcal{L}_{\text{boundary}}(\theta)||}\\
    \hat \lambda_{\text{grad}} &= \frac{||\nabla_\theta\mathcal{L}_{\text{boundary}}(\theta)|| + ||\nabla_\theta\mathcal{L}_{\text{grad}}(\theta)||}{||\nabla_\theta\mathcal{L}_{\text{grad}}(\theta)||}
\end{align}

To obtain $\lambda$ used in \cref{eq:FullLoss} we define $\lambda \coloneq \frac{\hat \lambda_{\text{grad}}}{\hat \lambda_{\text{boundary}}}$. By applying this weighting scheme, the magnitudes of the gradients of the two (weighted) contributions with respect to the model parameter $\theta$ are equal in each iteration. This ensures that the optimization of the objective is not dominated by a single contribution, but instead considers the full objective.

In practice, following \cite{wang2023expertsguidetrainingphysicsinformed},we update $\hat \lambda_{\text{boundary}}$ and $\hat \lambda_{\text{grad}}$ every $n$ training iterations. This reduces the computational overhead associated with the additional automated differentiation steps required to compute $\hat \lambda_{\text{boundary}}$ and $\hat \lambda_{\text{grad}}$. Additionally, we smooth $\hat \lambda_{\text{boundary}}$ and $\hat \lambda_{\text{grad}}$ by applying exponential averaging controlled by the parameter $\alpha_{\text{balance}}$. 

In our implementation, $\hat \lambda_{\text{boundary}}$ is computed based on the batch of training data used in the update step before the weights are updated. $\hat \lambda_{\text{grad}}$ is computed through a separate, data free calculation of $\mathcal{L}_{\text{grad}}$, using the current state of the training routine (i.e. causality weights and stored expectation values, as described in \cref{sec:practical-implementation}). 

\subsection{Sampling of the Condition for the Continuous Gradient Loss}
\label{app:ContinuousCSampling}
As described in \cref{sec:ConditionSampling}, we can define $p_{\operatorname{grad}}(c)$ based on causality weights only if we discretize the domain of $C$. In the alternative case, we define different sampling schedule from $p_{\operatorname{grad}}(c)$, which starts with the distribution concentrated around $c_0$ and progressively moves outward based on the current training step $t$. Given hyperparameters $s_\text{min}, s_\text{max}$, which determine the skew of the distribution towards $c_0$ at the start and end, respectively, we sample $c$ as follows:
\begin{align}
    & r \sim \mathcal{U}_{[ 0, 1 ]}  \\
    & w \sim \mathcal{B} \left( \frac{\log c_\text{max} - \log c_0}{\log c_\text{max} - \log c_\text{min}} \right) \\
    & \zeta := \exp \left( \log s_\text{min} + \frac{t}{t_\text{max}} \left( \log s_\text{max} - \log s_\text{min} \right) \right) \\
    & c = \begin{cases}
    \exp \left(\log c_0 + (1 - r ^ \zeta) (\log c_\text{max} - \log c_0) \right)& \text{if } w = 1 \\
    \exp \left(\log c_0 + (1 - r ^ \zeta) (\log c_\text{min} - \log c_0) \right)& \text{if } w = 0
    \end{cases}
\end{align}
The parameter $w$ chooses whether to move upward or downward from $c_0$ and $r^\zeta$ controls the magnitude of the shift. The decay/growth behavior of $r^\zeta$ is governed by $s_\text{min}$ and $s_\text{max}$. For $\zeta = 0$ we recover a $\delta$-distribution around $c = c_0$; $\zeta = 1$ results in a log-uniform distribution and $\zeta \rightarrow \infty$ results in $\delta$-distributions around $c_\text{min}$ and $c_\text{max}$. In general, we find that $s_\text{min} = 0.01, s_\text{max} = 1.5$ are good choices.

\subsection{Ablation Study: Multidimensional Wells}
\label{app:AppendixExperimentalDetailsAblation}

We generate data according to the energy given in \cref{eq:MultiwellEnergy} using 32 MCMC chains, each initialized in one of the basins. A step size of 500 is used for $T=1.0$, creating 300,000 data points while a step size of 1000 is used for $T=0.5$ to create 30,000 data points. All models use RQ-spline coupling blocks \citep{durkan2019neuralsplineflows} without augmentation dimensions, except for TSF, which employs 25 volume-preserving affine coupling blocks with five augmentation dimensions. We find that increasing the learning rate of TSF to $1 \times 10^{-3}$ improves its performance. \cref{tab:AblationHparams} lists the remaining hyperparameters.

\begin{table}
    \caption{Hyperparameters for the multidimensional well experiment. Parameters exclusively used by TRADE are highlighted in gray.\newline}
    \centering
    \begin{tabular}{l|c}
        \toprule
        Hyperparameter & Value \\
        \midrule
        Train, val, test split & 0.8, 0.1, 0.1 \\
        Coupling blocks & 15 \\
        Coupling type & RQ Splines \citep{durkan2019neuralsplineflows} \\
        Hidden dimensions & [256, 256] \\
        Latent distribution & normal \\
        Activation & SiLU \\
        Learning rate & $2 \times 10^{-4}$ \\
        Weight decay & $0.0$ \\
        Optimizer & Adam \\
        Gradient clipping & 3.0 \\
        Dequantization noise & $0.0$ \\
        Batch size & 512 \\
        Training steps & 10,000 \\
        Lr scheduler & one-cycle lr \citep{smith2018superconvergencefasttrainingneural} \\
        \midrule
        \rowcolor{customgray} $\lambda_{\mathcal{L}_\text{grad}}$ & $1.0$ \\
        \rowcolor{customgray} Start $\mathcal{L}_\text{grad}$ at step & 2,000 \\
        \rowcolor{customgray} Grid points & 15 \\        
        \rowcolor{customgray} $\epsilon$ causality weights & $0.9$ \\
        \rowcolor{customgray} Expectation value update interval & 100 steps \\
        \rowcolor{customgray} Expectation value update samples & 5,000 \\
        \rowcolor{customgray} $p_{\operatorname{grad}}(x|c)$ & $p_\theta(x|c) + \mathcal{N}(0, 0.0003^2)$\\
        \bottomrule
    \end{tabular}
    \label{tab:AblationHparams}
\end{table}

\subsection{Tempered Bayesian Inference: Two Moons}

We train models on the task introduced by \cite{greenberg2019automatic}, but modify the prior distribution to $p(\psi) = \mathcal{N}(0, 0.3^2)$, as described in the main text. For training, we need to estimate 
\begin{equation}
    \frac{\partial \log p_\beta(\psi | y)}{\partial \beta} \propto \log p(y | \psi),
\end{equation}
which we approximate as $\log p(y | \psi) \approx \log p_\theta(\psi | y) - \log p(\psi)$. 

For evaluation, we need to sample from $\log p(\psi | y, \beta)$ in order to compute the calibration curves. This is equivalent to sampling from $p(\psi)$ and generating observations from $p(y | \psi)^\beta$. We can approximate $p(y | \psi)^\beta \propto p(r)^\beta p(\alpha)^\beta$, which is tractable since $p(r)^\beta = \mathcal{N}(0.1, \frac{0.01^2}{\beta})(r)$ and $p(\alpha)^\beta = p(\alpha)$. 

The likelihood $p(y | \psi)$ forms a half-moon distribution, with the width being controlled by $r$. As $\beta$ becomes too small, the variance of $r$ increases, and the approximation $p(y | \psi)^\beta \propto p(r)^\beta p(\alpha)^\beta$ may become inaccurate. However, in the considered regime of $\beta \in [0.5, 2.0]$, the approximation is sufficiently accurate. 

We compute the calibration by drawing 300,000 pairs $y, \psi$ as described above, then for each $y$, we draw 300 model samples of $\tilde{\psi}$ and compute the corresponding $\tilde{r}$. Starting from the median of the $\tilde{r}$, we move outward until we reach the ground truth $r$, and report the coverage obtained in this way in \cref{fig:TwoMoonsResults}. The 95\% confidence intervals are obtained via bootstrap sampling. 

Hyperparameters are listed in \cref{tab:TwoMoonsHparams} and in \cref{fig:TwoMoonsSupplementary}, we provide samples from the learned posterior distribution at different $\beta$ for $r, \alpha = 0$. 

\begin{table}
    \caption{Hyperparameters for the two moons experiment. Parameters exclusively used by TRADE are highlighted in gray.\newline}
    \centering
    \begin{tabular}{l|c}
        \toprule
        Hyperparameter & Value \\
        \midrule
        Train, val, test split & 0.8, 0.1, 0.1 \\
        Coupling blocks & 15 \\
        Coupling type & RQ Splines \citep{durkan2019neuralsplineflows} \\
        Hidden dimensions & [128, 128] \\
        Latent distribution & normal \\
        Activation & SiLU \\
        Learning rate & $4 \times 10^{-4}$ \\
        Weight decay & $0.0$ \\
        Optimizer & Adam \\
        Gradient clipping & 3.0 \\
        Dequantization noise & $3 \times 10^{-4}$ \\
        Batch size & 512 \\
        Training steps & 10,000 \\
        Lr scheduler & one-cycle lr \citep{smith2018superconvergencefasttrainingneural} \\
        \midrule
        \rowcolor{customgray} $\lambda_{\mathcal{L}_\text{grad}}$ & $0.1$ \\
        \rowcolor{customgray} Start $\mathcal{L}_\text{grad}$ at step & 2,000 \\
        \rowcolor{customgray} Discretize & no \\
        \rowcolor{customgray} $p_{\operatorname{base}}(x|c)$ & $p_\theta(x|c)$ \\
        \rowcolor{customgray} $p_{\operatorname{grad}}(x|c)$ & $p_\theta(x|c) + \mathcal{N}(0, 0.0003^2)$\\
        \bottomrule
    \end{tabular}
    \label{tab:TwoMoonsHparams}
\end{table}

\begin{figure}
    \centering
    \includegraphics[width=0.32\linewidth]{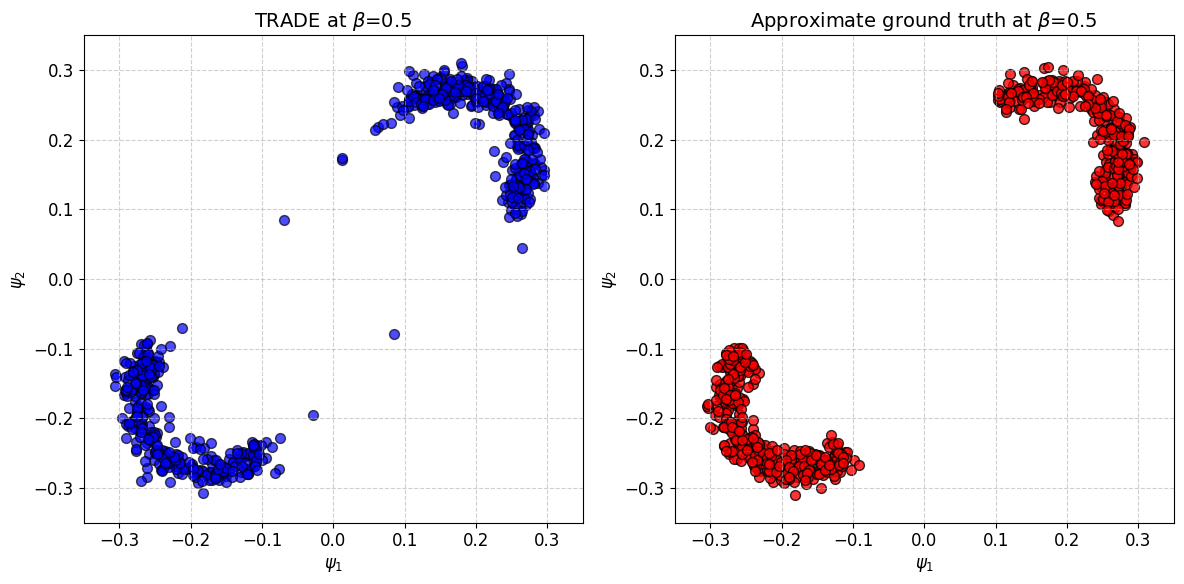}
    \vrule width 1pt height 2.8cm 
    \includegraphics[width=0.32\linewidth]{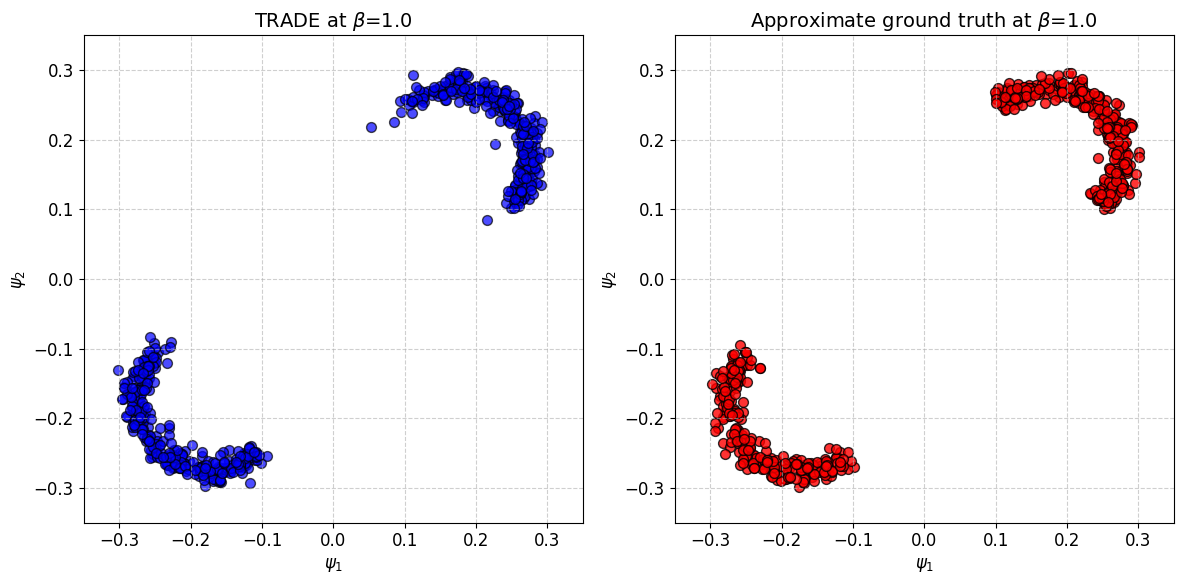}
    \vrule width 1pt height 2.8cm 
    \includegraphics[width=0.32\linewidth]{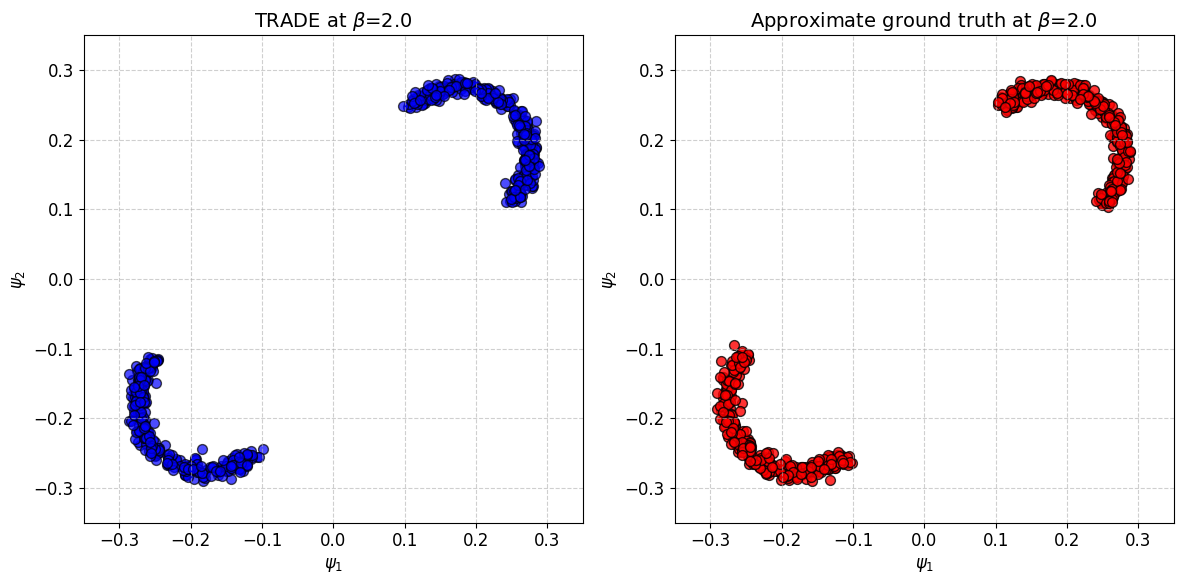}
    \caption{Learned (blue) vs. estimated (red) posterior samples of $\psi$ at different $\beta$ values for the observation $r, \alpha = 0$, by TRADE in the two moons experiment.}
    \label{fig:TwoMoonsSupplementary}
\end{figure}

\subsection{Temperature Scaling of Alanine Dipeptide}

We follow the setting of \cite{dibak2022temperature} to train a normalizing flow that learns the distributions of Alanine Dipeptide configurations at $T \in [300K, 600K]$, using their publicly available dataset. The configurations are first transformed into internal coordinates (torsions, angles, bonds), using a similar coordinate transform to \cite{noe2019boltzmann}. All three methods are tested for affine coupling blocks \citep{dinh2016density} and rational quadratic spline coupling blocks \cite{durkan2019neuralsplineflows}. Affine coupling blocks are made volume-preserving for TSF by normalizing the logarithmic scaling coefficients $s$ of each block to 0. For a volume-preserving spline coupling architecture we use the architecture proposed in figure 1 of \citet{dibak2022temperature} in combination with the approximation for temperature steerable rational quadratic splines the authors present in appendix B. The latent distributions for affine models are scaled according to the temperature at which they are evaluated. The subnetworks of each block are single residual blocks, with an input and output layer scaling to and from the hidden size. The remaining hyperparameters are listed in \cref{tab:Ala2Hparams}. Additionally, we provide a visualizations of results of all architectures at $T = 600K$ and $T=300K$ in \cref{fig:AlanineDipeptideRemainingResults}. We performed a preliminary hyperparameter search using optuna \citep{optuna_2019}, then finetuned them for each model by hand.

\begin{table}
    \caption{Hyperparameters for TRADE and TSF for the temperature scaling of Alanine Dipeptide. Parameters exclusively used by TRADE are highlighted in gray. Whenever different methods use different hyperparameters they are listed in the order TRADE/TSF/Backward KL \newline}
    \centering
    \resizebox{\linewidth}{!}{
    \begin{tabular}{l|cc}
        \toprule
        Hyperparameter & Affine Models & Spline Models\\
        \midrule
        Train, val, test split & 0.8, 0.05, 0.15 & 0.8, 0.05, 0.15 \\
        Coupling blocks & 20 & 18\\
        Hidden dimensions & [128, 128, 128] & [256, 256] \\
        Augmentation dimensions \citep{huang2020augmented} & 60 & 0\\
        Latent distribution & normal & uniform + truncated normal \\
        Activation & SiLU & SiLU \\
        Learning rate & $3 \times 10^{-4}$ & $1 \times 10^{-4}$ \\
        Weight decay & $1 \times 10^{-6}$/$1 \times 10^{-5}$/$1 \times 10^{-5}$ & $3.7 \times 10^{-6}$ \\
        Optimizer & Adam & Adam \\
        Gradient clipping & 3.0 & 3.0 \\
        Dequantization noise & $3 \times 10^{-4}$ & $3 \times 10^{-4}$ \\
        Batch size & 1024 & 289 \\
        Training steps & 58k & 83k \\
        Lr scheduler & one-cycle lr \citep{smith2018superconvergencefasttrainingneural} & one-cycle lr  \\
        \midrule
        \rowcolor{customgray} $\lambda_{\mathcal{L}_\text{grad}}$ & 0.1 & 0.23\\
        \rowcolor{customgray} Start $\mathcal{L}_\text{grad}$ at step & 23k & 25k \\
        \rowcolor{customgray} Discretize & no & yes\\
        \rowcolor{customgray} Grid points & - & 25 \\        
        \rowcolor{customgray} $\epsilon$ causality weights & - & 0.12 \\
        \rowcolor{customgray} Expectation value update interval & - & 1,000 steps \\
        \rowcolor{customgray} Expectation value update samples & - & 5,000 \\
        \rowcolor{customgray} $p_{\operatorname{base}}(x|c)$ & $p_\theta(x|c)$ & $p_\theta(x|c)$ \\
        \rowcolor{customgray} $p_{\operatorname{grad}}(x|c)$ & $p_\theta(x|c) + \mathcal{N}(0, 0.0003^2)$ & $p_\theta(x|c) + \mathcal{N}(0, 0.0003^2)$\\
        \rowcolor{customgray} Use ground truth $q(x|c)$ & yes & yes \\
        \bottomrule
    \end{tabular}}
    \label{tab:Ala2Hparams}
\end{table}

\begin{figure}
    \centering
    \includegraphics[width=\linewidth]{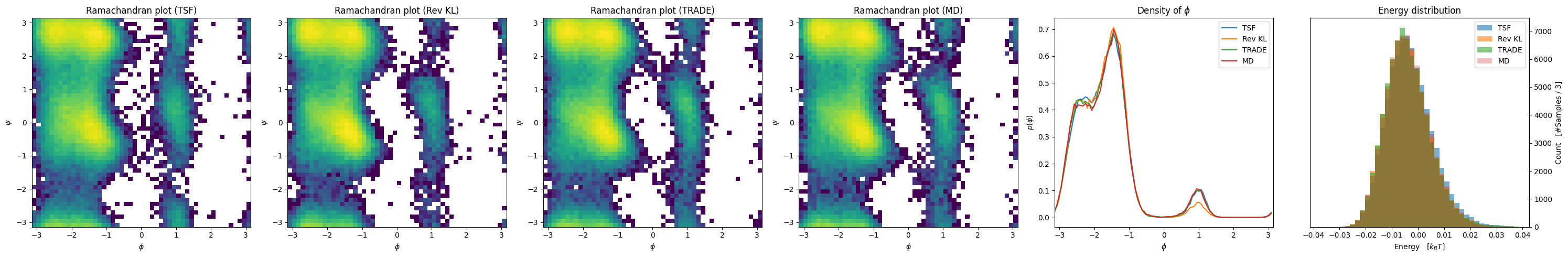}
    \includegraphics[width=\linewidth]{figures/ala2/ala2_300K_spline.png}
    \includegraphics[width=\linewidth]{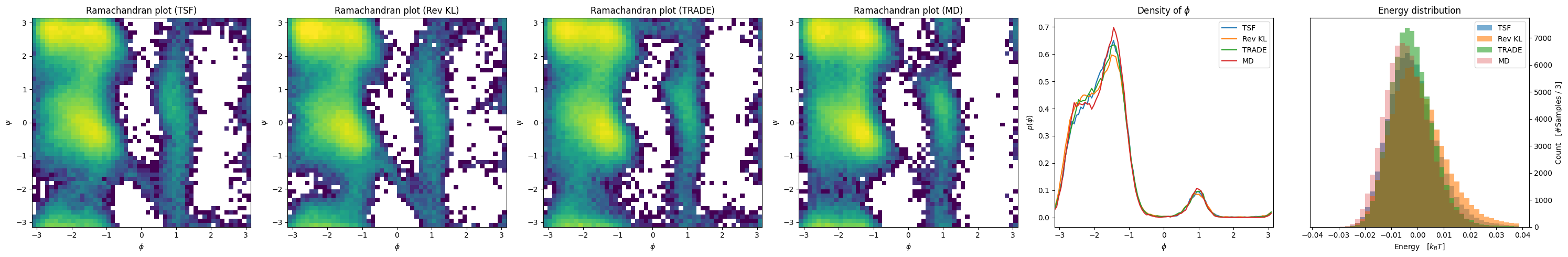}
    \includegraphics[width=\linewidth]{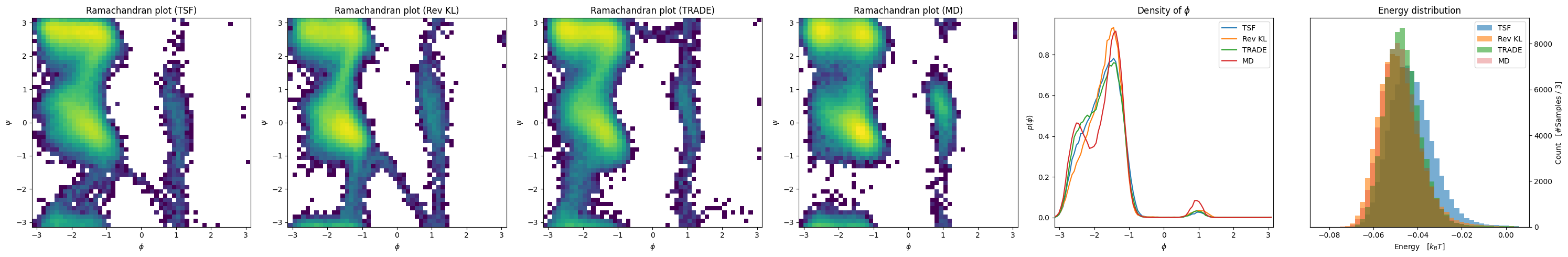}
    \caption{A comparison of temperature steerable flows (TSF, \cite{dibak2022temperature}) and backward KL with TRADE at different temperatures and architectures for Alanine Dipeptide. From left to right: Ramachandran plots of model samples (TSF, backward KL, TRADE) and molecular dynamics (MD), marginal density of the $\phi$ angle, and ground truth energy of model samples and MD samples (red). From top to bottom: Spline model $T=600K$, spline model $T=300K$, affine model $T=600K$, affine model $T=300K$}
    \label{fig:AlanineDipeptideRemainingResults}
\end{figure}

\subsection{Varying External Parameters in a Scalar Field Theory}

\subsubsection{Physical Observables}

In \cref{fig:ScalarTheoryPhysicalProperties} we visualize some physical observables taken from \cite{Pawlowski_2017}. The expected absolute magnetization per spin (subplot A) is defined by \cref{eq:Appendix-Absolute-Mean-magnetization-per-spin}. The expected ground truth action per spin (subplot B) is given by \cref{eq:Appendix-expected-action-per-spin}, where $S(\phi,\kappa)$ corresponds to \cref{eq:ActionScalarTheory} with $\lambda = 0.02$. \Cref{eq:Appendix-Susceptibility} defines the susceptibility (subplot C), and finally, \cref{eq:Appendix-Binder-Cumulant} defines the Binder cumulant (subplot D). Here, $N$ represents the number of lattice points.

\begin{align}
    m(\phi) &= \frac{1}{N} \left|\sum_x \phi_x \right|\\
    \left< m(\kappa)\right> &= \EX_{p_\theta(x|\kappa)}\left[m(\phi)\right]\label{eq:Appendix-Absolute-Mean-magnetization-per-spin}\\
    \left< s(\kappa) \right> &= \frac{1}{N} \EX_{p_\theta(x|\kappa)}\left[S(\phi,\kappa)\right]\label{eq:Appendix-expected-action-per-spin}\\
    \chi^2(\kappa) &= N\left(\EX_{p_\theta(x|\kappa)}\left[m(\phi)^2\right] - \EX_{p_\theta(x|\kappa)}\left[m(\phi)\right]^2\right)\label{eq:Appendix-Susceptibility}\\
    U_L(\kappa) &= 1 - \frac{1}{3}\frac{\EX_{p_\theta(x|\kappa)}\left[m(\phi)^4\right]}{\EX_{p_\theta(x|\kappa)}\left[m(\phi)^2\right]^2}\label{eq:Appendix-Binder-Cumulant}
\end{align}

In our experiments, the expectation values in \cref{eq:Appendix-Absolute-Mean-magnetization-per-spin} to \cref{eq:Appendix-Binder-Cumulant} are approximated by the averaging over 10,000 samples drawn from the learned distributions.

\subsubsection{Model Architecture and Training Details}

\textbf{Model}

The three models used in the experiments in \cref{sec:ExperimentScalarTheory} share same model architecture. Instead of directly using the external parameter as condition for the networks, we transform the condition by passing the logarithm of the external parameter through a small residual network. This network consists of one residual block containing a fully connected neural network with two hidden layers, each with 32 hidden neurons.  

The invertible function is structured as follows: In the first layer, we apply a channel-wise global linear transformation. The parameters of this transformation are computed by fully connected neural networks with two hidden layers and 64 hidden neurons. The input to these models is the transformed external parameter. After this global scaling, the input images, which have dimensionality $(1,H,W)$, are transformed to a shape of $(4,H / 2,W / 2)$ using an invertible down-sampling layer  \citep{jacobsen2018irevnetdeepinvertiblenetworks}. This layer is followed by three rational-quadratic neural spline coupling blocks \citep{durkan2019neuralsplineflows} which use convolutional subnetworks as described in \cref{tab:ConvSubnetWorkScalarTheory}. 

Next, the data is passed through an invertible flattening operator and three additional rational-quadratic neural spline coupling blocks, this time using fully connected subnetworks (see \cref{tab:FCSubnetWorkScalarTheory}). The transformed external parameter is passed to all six rational-quadratic neural spline coupling blocks via an additional channel added to the input of the coupling block. The latent distribution is a 64-dimensional standard normal distribution. All networks use SiLU activations. The weights of the subnetworks used in the coupling blocks are initialized using Xavier normal initialization \citep{pmlr-v9-glorot10a}. Additionally, the weights of the final layer in each subnetwork are initialized with zeros.

\begin{table}[ht]
    \caption{Convolutional subnetworks used in the rational-quadratic neural spline coupling blocks of the normalizing flows for the scalar theory experiments on a lattice, as described in \cref{sec:ExperimentScalarTheory}.\newline}
    \centering
    \begin{tabular}{c|c|c|c}
         Step&Operation&Input shape&Output shape\\
         \hline
         1&3x3 Convolution&$c_{in} \times H \times W$&$64 \times H \times W$\\
         2&SiLU Activation&$64 \times H \times W$&$64 \times H \times W$\\
         3&3x3 Convolution&$64 \times H \times W$&$64 \times H \times W$\\
         4&SiLU Activation&$64 \times H \times W$&$64 \times H \times W$\\
         5&1x1 Convolution&$64 \times H \times W$&$32 \times H \times W$\\
         6&SiLU Activation&$32 \times H \times W$&$32 \times H \times W$\\
         7&1x1 Convolution&$32 \times H \times W$&$32 \times H \times W$\\
         8&SiLU Activation&$32 \times H \times W$&$32 \times H \times W$\\
         9&1x1 Convolution&$32 \times H \times W$&$c_{out} \times H \times W$
    \end{tabular}
    \label{tab:ConvSubnetWorkScalarTheory}
\end{table}

\begin{table}[ht]
    \caption{Fully connected subnetworks used in the rational-quadratic neural spline coupling blocks of the normalizing flows for the scalar theory experiments on a lattice, as described in \cref{sec:ExperimentScalarTheory}.\newline}
    \centering
        \begin{tabular}{c|c|c|c}
         Step&Operation&Input shape&Output shape\\
         \hline
         1&Linear&$c_{in}$&$128$\\
         2&SiLU Activation&$128$&$128$\\
         3&Linear&$128$&$128$\\
         4&SiLU Activation&$128$&$128$\\
         5&Linear&$128$&$128$\\
         6&SiLU Activation&$128$&$128$\\
         7&Linear&$128$&$128$\\
         8&SiLU Activation&$128$&$128$\\
         9&Linear&$128$&$c_{out}$
    \end{tabular}
    \label{tab:FCSubnetWorkScalarTheory}
\end{table}

\textbf{Data}

Training data is generated using Langevin dynamic based on the drift term stated in \cite{Pawlowski_2017}. The MCMC is initialized from Gaussian noise, and we select the states that are in equilibrium. To ensure, that the samples are uncorrelated, we compute the correlation times between the samples and only keep samples that are two correlation times apart from each other.

To obtain training and validation data, we start two different MCMC runs, each with a different random seed.

To ensure that the validation data covers all modes of the data distribution, we use 1,000 samples from the MCMC and apply data augmentation by flipping the sign of the whole states, as well as flipping them horizontally and vertically, along with combinations of these transformations. This results in validation sets of size 8,000 for each validation $\kappa$.

For the NLL contribution to TRADE and the combined NLL and backward KL training, we randomly apply data augmentation to the selected batch of training data in each training iteration. This augmentation exploits the symmetries of the system under investigation. Specifically, we randomly flip the sign of the states, apply random mirroring, and random translations of the states. 

\textbf{Training}

The hyperparameters for the training with TRADE, NLL-only, and the combined NLL and backward KL objective are summarized in \cref{tab:ScalarTheoryTradeHparams}. The training of TRADE and NLL+backward KL follows a two-stage training: first, the model is only trained using only the NLL loss for $n_{\text{nll}}$ epochs. After that, the full objective is applied for the remaining $n_{\text{epochs}} - n_{\text{nll}}$ epochs. In the second phase, the learning rate is multiplied by a decay factor $\gamma_{\text{lr}}$. 

In both phases, a one-cycle learning rate scheduler \citep{smith2018superconvergencefasttrainingneural} is used. Loss balancing, as described in \cref{app:LossBalancing}, is applied to TRADE and the NLL + backward KL training. Additionally, we allow a fixed relative weighting factor $\lambda_{\text{fixed}}$, which is multiplied by the adaptive $\lambda$ obtained from the dynamic weighting scheme. 

For both training schemes, the range of $\kappa$ used in the computation of the data-free loss contribution is given by $I_{\kappa}$. For TRADE, this interval is discretized into $N_{\text{grid}}$ grid points.

To encourage the model to focus on the region between the two $\kappa$ values where training data is available, $\epsilon$, as used in the computation of the importance weights (see \cref{sec:ConditionSampling}), is reduced in this region for TRADE by multiplying it by $\gamma_{\epsilon}$. Across the entire grid, $\epsilon$ decays exponentially over the course of training, reaching a final ratio of $r_\epsilon$ relative to its initial value. This is intended to broaden the distribution of sampled grid points as the training progresses. 

For $p_{\text{grad}}(x|c)$ (see \cref{sec:EvaluationPointSampling}) we use $p_{\theta}(x|\kappa)$. Additionally, we apply the same random data augmentation to the proposed samples as we do to the training data for the NLL contribution and add pixel-wise Gaussian noise with standard deviation $\sigma_{\text{noise}}$. To prevent training from being affected by outliers, the Huber loss with a parameter $\delta_{\text{Huber}}$ is used in the computation of $\mathcal{L}_{\text{grad}}$, instead of the mean squared error described in \cref{eq:GradientBasedLossContributionTrade}. 
The exponential averaging of the stored expectation values on the grid points is governed by $\alpha_{\text{expectation}}$ and the exponential averaging of $\mathcal{L}_{\text{grad}}$ at the grid points is governed by $\alpha_{\text{grad}}$.

For the combined NLL and backward KL training, samples from $p_{\theta}(x|\kappa)$ are required for the backward KL contribution. In the experiment, we sample the logarithm of the condition $\kappa$ uniformly.

For the training off the NLL-only model the exactly same parameters as for the model trained with TRADE are applied, except that the data-free loss contribution $\mathcal{L}_{\text{grad}}$ is not used.

\subsubsection{Model Evaluation}\label{app:ScalarTheoryEvaluation}

To evaluate the performance of a certain model, we compute the NLL of the validation sets at different values of $\kappa$ (See entry "Evaluation parameters" in \cref{tab:ScalarTheoryTradeHparams}) and then compute the average of the NLLs across these $\kappa$. This average serves as a measure of the model's overall performance. The models are evaluated every five epochs during training, and the model with the lowest average validation NLL is selected for further examinations.

\subsubsection{Additional Experimental Results}

In this section, we want to examine the stability of the TRADE training scheme. To do this, we use the training configuration described in \cref{tab:ScalarTheoryTradeHparams} and repeat the training seven times, each with a different random seed. The best model from each run is selected following the procedure outlined in \cref{app:ScalarTheoryEvaluation}. For every model obtained, we compute the relative ESS as a function of $\kappa$. Each ESS value is based on 10,000 samples drawn from the learned distribution. The same evaluation is repeated for the combined NLL and backward KL training.

In \cref{fig:Appendix-Scalar-Theory-ESS-different-random-seed}, we average the relative ESS over the seven runs for both training schemes (solid lines) and also visualize the interval defined by the standard deviation (shaded regions). For reference, we also plot the ESS for the one model trained with NLL-only on the $\kappa$ where training data is available.

It is clearly observable, that both training schemes perform best near the $\kappa$ values where NLL training is performed, with the smallest variation in these regions. However, across the full examined range of $\kappa$, TRADE shows significantly smaller fluctuation. In particular, for larger $\kappa$ values, the combined NLL and backward KL training exhibits a much larger standard deviation.

This evaluation indicates that TRADE produces more reliable training outcomes, showing far less sensitivity to changes in the random seed compared to the baseline of combined NLL and reverse KL training.

\Cref{fig:ScalarTheoryPhysicalProperties,fig:ScalarTheoryESS} in \cref{sec:ExperimentScalarTheory} are based on the best model (with respect to the average validation NNL) out of the seven runs for each training scheme.

\subsection{Software Packages}
\label{app:SoftwarePackages}

Our experiments are implemented in Python and rely on the following packages: For the implementation of the training routines, we use PyTorch \citep{paszke2019pytorch} and PyTorch Lightning \citep{Falcon_PyTorch_Lightning_2019}. Training progress is tracked by using TensorFlows's \citep{tensorflow2015-whitepaper} TensorBoard. Our invertible neural networks are based on FrEIA \citep{freia}. For general computations, we use NumPy \citep{harris2020array} and pandas \citep{mckinney-proc-scipy-2010,reback2020pandas}. Plotting is done with Matplotlib \citep{Hunter2007Matplotlib}. To systematically test different hyperparameter configurations, we use Optuna \citep{optuna_2019}.

\subsection{Compute Resources}
\label{app:ComputeResources}

To run our experiments, we used up to 12GB of GPU working memory. All our experiments can be completed in less than ten hours on such a device. In addition to our off-the-shelf desktop computers, we had access to a high-performance computing cluster, which we used for some of the experiments.

\begin{table}
    \caption{Hyperparameter for the models trained on the lattice field theory in \cref{sec:ExperimentScalarTheory}. The \textbf{first block} corresponds to parameters concerning the general training routine. The \textbf{second block} details the specific settings for the data-free loss contribution, and the \textbf{third block} describes the dataset and the validation procedure for the trained models. A "-" indicates that this parameter is not applicable the corresponding training scheme.\newline}
    \centering
    \begin{tabular}{l|c|c|c}
        \toprule
         & TRADE & NLL-only & NLL + backward KL \\
        \midrule
        Learning rate & $7.66 \times 10^{-5}$&$7.66 \times 10^{-5}$&$1.33 \times 10^{-3}$\\
        $\gamma_{\text{lr}}$ & 3.0&3.0&0.5\\
        $n_{\text{epochs}}$ & 400&400&400\\
        $n_{\text{nll}}$ & 20&20&73\\
        Weight decay & 0.0&0.0&0.000135\\
        Batch size (NLL)& 1024&1024&1024\\
        Gradient clipping& 1,000&1,000&16.0\\
        Optimizer & Adam&Adam&Adam\\
        \midrule
        $\lambda_{\text{fixed}}$& 0.355&-&3.847\\
        $\sigma_{\text{noise}}$& 0.38&-&-\\
        $I_{\kappa}$&$[0.22,0.32]$&-&$[0.22,0.32]$\\
        $\delta_{\text{Huber}}$&6.87&-&-\\
        Batch size (data-free loss) & 2048&-&2048\\
        $\epsilon$ causality weights & $1.57\times 10^{-6}$&-&-\\
        $\gamma_\epsilon$&0.376&-&-\\
        $r_\epsilon$&0.228&-&-\\
        $N_{\text{grid}}$ & 150&-&-\\
        $\alpha_{\text{expectation}}$&0.1&-&-\\
        $\alpha_{\text{grad}}$&0 $2.5\times 10^{-5}$&-&-\\
        $\alpha_{\text{balance}}$ &0.1&-&0.1\\
        Update frequency expectation value&2,000 iterations&-&-\\
        Samples expectation value update&1500&-&-\\
        \midrule
        Base parameters for NLL loss &  $\{0.25,0.29\}$&$\{0.25,0.29\}$&$\{0.25,0.29\}$\\
        Independent samples per base parameter & 100,000&100,000&100,000\\
        Samples per validation parameter & 8,000&8,000&8,000\\
        Evaluation parameters&$\{0.24,0.25,...,0.3\}$&$\{0.24,0.25,...,0.3\}$&$\{0.24,0.25,...,0.3\}$\\
        Random data augmentation (training)& yes&yes&yes\\
        Dequantization noise & no& no & no\\
        Lattice size &$8\times8$&$8\times8$&$8\times8$\\
        \bottomrule
    \end{tabular}
    \label{tab:ScalarTheoryTradeHparams}
\end{table}

\begin{figure}
    \centering
    \includegraphics[width=\linewidth]{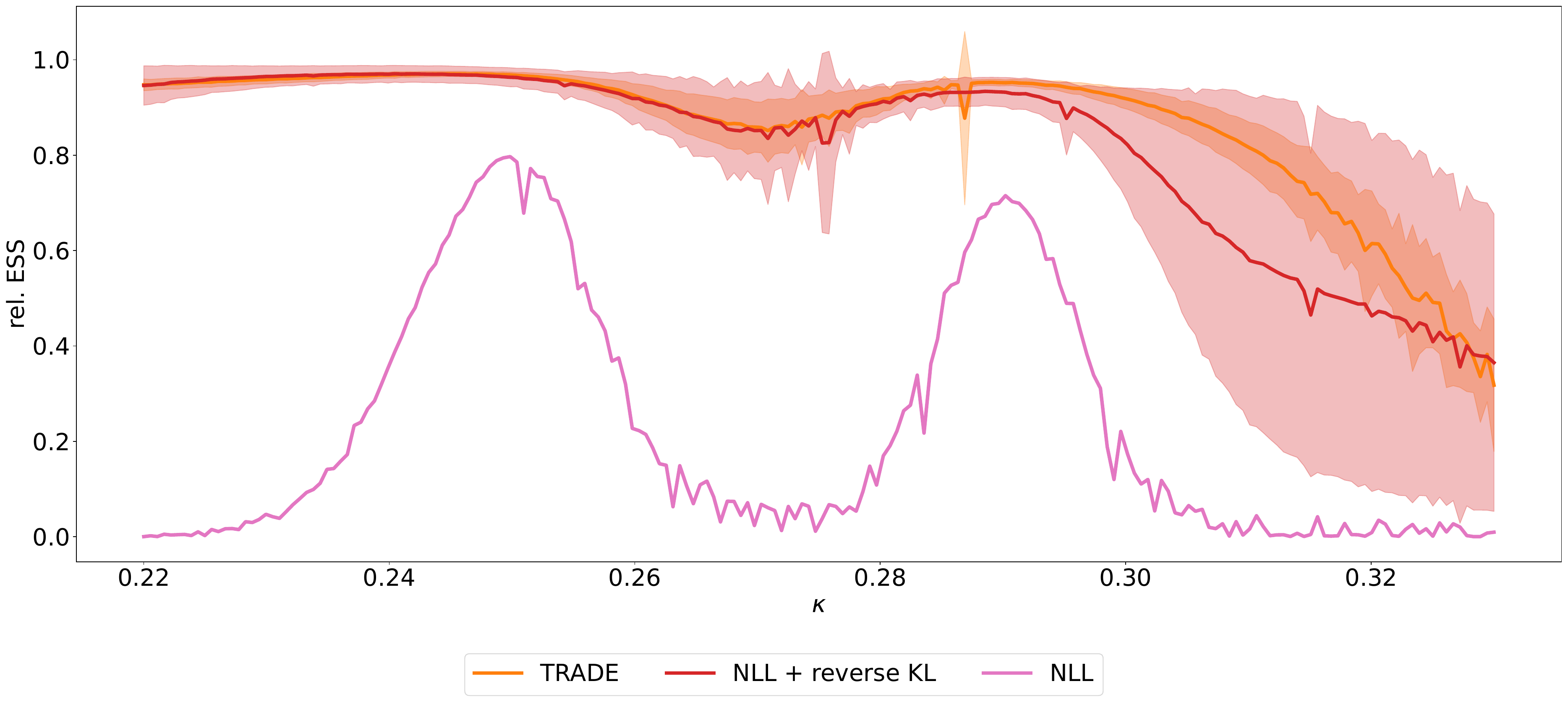}
    \caption{Relative ESS for the different training schemes examined for the scalar theory on an $8\times8$ lattice. For the model trained with TRADE and with a combination of NLL and backward KL, the solid line indicate the average over seven training runs, each initialized with a different random seed. The shaded areas correspond to the interval defined by the standard deviation of the ESS. For reference, the single run with NLL-only training is included as well. On can clearly observe that TRADE is much less sensitive to the selection of the random seed compared to the combined NLL and backward KL training.}
    \label{fig:Appendix-Scalar-Theory-ESS-different-random-seed}
\end{figure}

\section{GAUSSIAN MIXTURE MODEL}
\label{app:GMM}

We perform experiments on a Gaussian mixture model to showcase the shortcomings of several methods that were mentioned in the main text. We show the best model obtained with each training method across different temperatures in \cref{fig:Experiments_TRADE_shortcomings_density_best} and explain the experimental setup and results in the remainder of this section.

\begin{figure}
    \centering
    \includegraphics[width=0.75\linewidth]{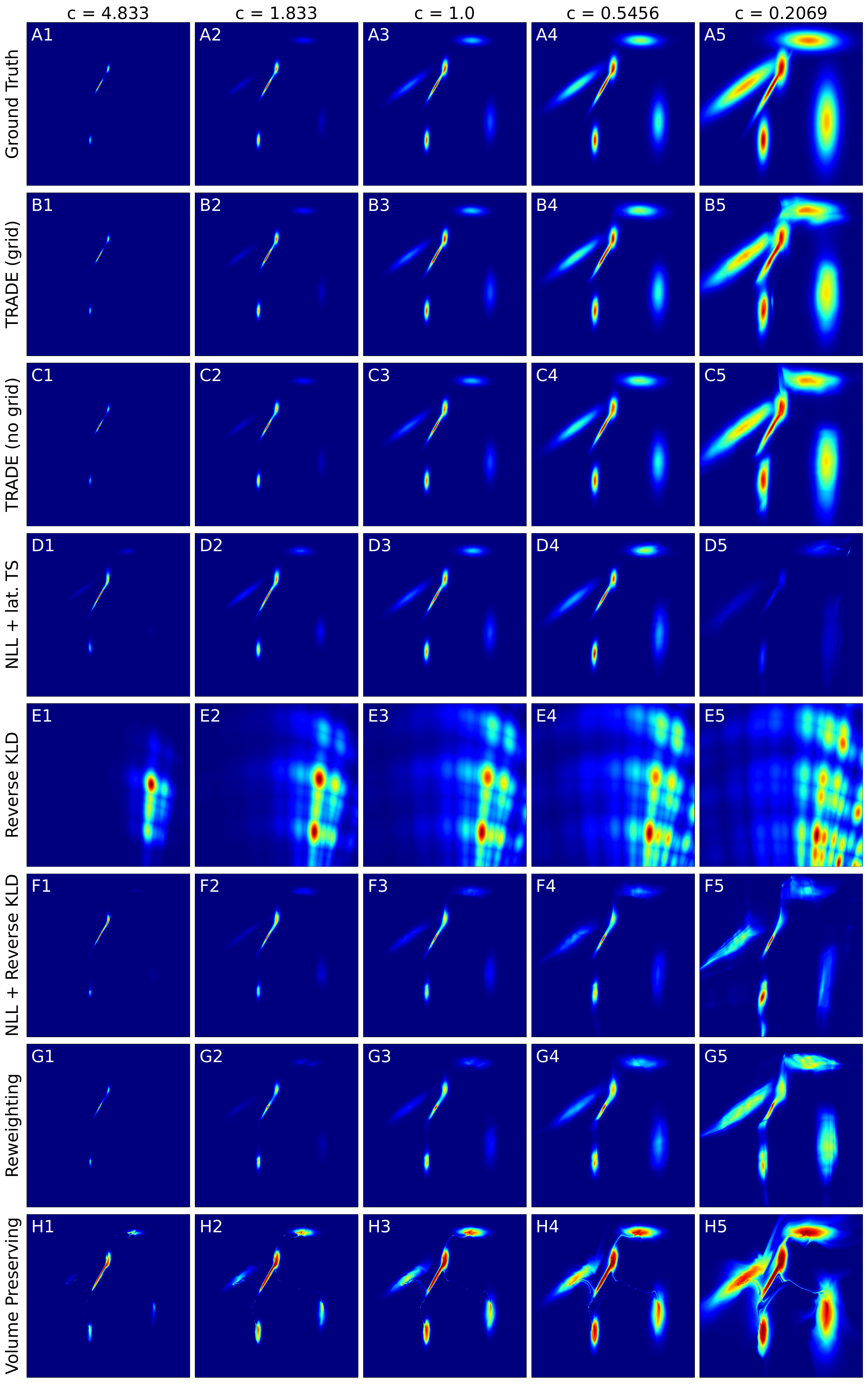}
    \caption{Visualization of the best learned densities for coupling based normalizing flows trained with different training objectives in comparison to the ground-truth density (\textbf{A}) for the best performing model based on the average validation negative log-likelihood (nll) at different external parameters. \textbf{B}: Model trained using TRADE with a discretized parameter space. \textbf{C}: TRADE trained with a continuous parameter space. \textbf{D}: Only nll training on training data at $c_0 = 1.0$ with latent power-scaling \textbf{E}: Reverse KL training \textbf{F}: Combined nll and reverse KL training. \textbf{G}: Training using reweighting of training samples at $c_0 = 1.0$ \textbf{H}: Volume-Preserving normalizing flow trained with nll training at $c_0$ and latent power-scaling.} 
    \label{fig:Experiments_TRADE_shortcomings_density_best}
\end{figure}

\subsection{Data Set}

The data set used in this experiment is a two-dimensional Gaussian mixture model with six equally weighted modes:

\begin{equation}\label{eq:Experiments_2D_GMM_base_distribution}
        p^*((x,y)|c_0) = \frac{1}{6}\sum_{i = 1}^6 \mathcal{N}\left((x,y);\mu_i,\Sigma_i\right).
\end{equation}

Power-scaling is applied to this distribution, resulting in the following conditional target distribution:

\begin{equation}\label{eq:Experiments_2D_GMM_base_distribution_ps}
        p^*((x,y)|c) \propto p^*((x,y)|c_0)^{\frac{c}{c_0}}
\end{equation}

The covariance matrices $\Sigma_i$ and the means $\mu_i$ of the six Gaussian modes of the target distribution $p^*(x|c_0)$ are chosen as follows:

\begin{equation}
    \Sigma_1 = \begin{bmatrix}
    0.2778&0.4797\\0.4797&0.8615
\end{bmatrix}\Sigma_2 = \begin{bmatrix}
    0.8958&-0.0249\\-0.0249&0.1001
\end{bmatrix}\Sigma_3 = \begin{bmatrix}
    1.3074&0.9223\\0.7744&0.1001
\end{bmatrix}\notag
\end{equation}
\begin{equation}
    \Sigma_4 = \begin{bmatrix}
    0.0305& 0.0142\\ 0.0142&0.4409
\end{bmatrix}\Sigma_5 = \begin{bmatrix}
    0.0463&0.0294\\0.0294&0.3441
\end{bmatrix}\Sigma_6 = \begin{bmatrix}
    0.1500&0.0294\\0.0294&1.5000
\end{bmatrix}\notag
\end{equation}

\begin{equation}
    \mu_1 = \begin{bmatrix}
    -1.0\\2.0 
\end{bmatrix}
    \mu_2 = \begin{bmatrix}
    3.0\\7.0 
\end{bmatrix}
    \mu_3 = \begin{bmatrix}
    -4.0\\2.0 
\end{bmatrix}
    \mu_4 = \begin{bmatrix}
    -2.0\\-4.0 
\end{bmatrix}
    \mu_5 = \begin{bmatrix}
    0.0\\4.0 
\end{bmatrix}
    \mu_6 = \begin{bmatrix}
    5.0\\-2.0 
\end{bmatrix}\notag
\end{equation}

\subsection{Models}

\subsubsection{Non-Volume-Preserving Flow}

The non-volume-preserving flows are implemented as four consecutive Rational Quadratic Spline coupling blocks \citep{durkan2019neuralsplineflows}. The logarithm of the external parameter $c$ is incorporated into each coupling block by concatenating it with the block's input. The hyperparameters of the Rational Quadratic Spline transformations are set to the default values of FrEIA's \citep{freia} implementation of the Rational Quadratic Spline coupling block. For the model trained with the negative log-likelihood method at $c_0$ only, the invertible function is not conditioned on the external parameter $c$. Therefore, in this model, no conditioning information is passed to the coupling blocks. Each coupling block is preceded by an ActNorm layer and followed by a fixed random permutation of the output dimensions. This ensures that the active and passive dimensions are alternated in each coupling block. The subnetworks used to predict the transformation parameters in each coupling block are fully connected neural networks with two hidden layers of width 32 and SiLU activation. The weights of the linear transformations in these networks are initialized using PyTorch's Xavier normal initialization scheme \citep{pmlr-v9-glorot10a}. The weights and biases of the output layer in each subnetwork are initialized to zero to ensure that the invertible function resembles the identity function at the start of the training. The latent distribution is a power-scaled standard normal distribution:

\begin{equation}
    p_0(z|c) \propto \mathcal{N}\left(z;0,\mathbf{1}_2\right)^{\frac{c}{c_0}}.
\end{equation}

\subsubsection{Volume-Preserving Flow}

The volume-preserving flow consists of 20 GIN coupling blocks \citep{sorrenson2020disentanglementnonlinearicageneral} and uses a learnable permutation after each coupling block. The subnetworks and their initialization is the same as those used for the Rational Quadratic Splines, except that ReLU activation is used instead of SiLU activation. No condition is ingested to the coupling blocks. The latent distribution is a power-scaled standard normal distribution.

\subsection{Training}

All models are trained using the Adam optimizer \citep{kingma2017adammethodstochasticoptimization} with a maximum learning rate of $1\cdot 10 ^{-4}$, modulated by a oneCycle learning rate scheduler \citep{smith2018superconvergencefasttrainingneural}. A total of $100,000$ training samples following $p(x|c_0 = 1.0)$ are used to compute the negative log-likelihood loss for all training objectives except pure reverse KL training. The batch size for the negative log-likelihood loss contribution is $256$ and the batch size for all data-free losses is set to $512$. Training runs for a total of 600 epochs. Whenever the negative log-likelihood loss is combined with a data-free loss contribution (i.e., TRADE and reverse KL combined with negative log-likelihood training) the adaptive loss balancing scheme described in \cref{app:LossBalancing} is applied. The estimates $\hat\lambda_{\text{boundary}}$ and $\hat\lambda_{\text{grad}}$ are smoothed with an exponential averaging scheme, with smoothing parameter $\alpha_{\text{balance}} = 0.015$. For all models, gradient clipping with a threshold of 2.0 is applied. For all data-free loss contributions (i.e. TRADE and reverse KL), the parameters at which the loss is computed are sampled from the interval

\begin{equation}
    \left[c_{\text{min}},c_{\text{min}}\right] = [0.16238,6.15848]
\end{equation}. 

If the negative log-likelihood loss is combined with a data-free loss, both losses are jointly optimized from the beginning of the training. Besides these general training settings, the following loss-specific hyperparameters are applied:

\subsubsection{TRADE With Grid}

The condition is discretized into a grid of 250 points, initialized uniformly in the logarithmic space. The parameter governing the decay rate in the adaptive distribution for the condition values, at which the physics-informed loss is evaluated, is set to $\epsilon = 1.0$ and is decayed to $\epsilon = 0.8$ over the course of training using an exponential decay scheme. The losses stored for each grid point are smoothed using an exponential averaging scheme with a smoothing parameter $\alpha_{\text{loss}} = 0.9$. The stored expectation values are updated every 1,000 training steps based on 1,000 samples. These expectation values are also smoothed using an exponential averaging scheme with a parameter $\alpha_{\text{expectation}} = 0.75$. Instead of the means squared loss, the physics-informed loss contribution utilizes the Huber loss with a parameter $\delta = 0.1$. In each training step, the physics-informed loss contribution is evaluated at 512 condition values $c$ sampled from the adaptive distribution $p_{\text{grad}}$. For each condition value, one evaluation point is drawn from $p_{\theta}(x|c)$.

\subsubsection{TRADE Without Grid}

The condition $c$ is sampled uniformly in the logarithmic domain, with the borders of the interval from which the parameters are sampled linearly increasing in the logarithmic domain over time, reaching the full range by the end of the training, after 600 epochs. In each training step, five different parameters $c$ are sampled, at which the physics-informed loss contribution is evaluated for 105 evaluation points following $p_{\theta}(x|c)$. For the approximation of the expectation values required in the physics-informed loss contribution, 500 samples are used for each value of $c$. The remaining parameters follow those used in the grid-based version of TRADE.

\subsubsection{Reverse KL Training}

For the computation of the reverse KL objective, 512 condition values $c$ are sampled uniformly in the logarithmic space. These samples are then used to approximate the reverse KL objective. Unlike for TRADE, where the sampling range expands gradually, here the full range of possible condition values is considered from the beginning of the training.

\subsection{Model Selection}\label{sec:Experimental_Detail_Shortcomings_Base_Lines_model_selection_TRADE}

To evaluate the performance of the model at certain stages of training, the average validation negative log-likelihood is monitored and computed every five epochs. Based on this criterion, the best-performing model is selected. For this evaluation, 10,000 data points are generated at 15 different evaluation parameters. The evaluation parameters are equally spaced in the logarithmic domain in the interval $[0.2069,4.8330]$. Since sampling from the power-scaled target distribution is not straight forward, rejection sampling is applied. To ensure that the proposal distribution has sufficient overlap with the target distribution-and that enough proposal samples are obtained at the tails of the target distribution for small values of $c$-Gaussian noise with known variance and mean is added to the proposal states drawn from the target distribution $p(x|c_0)$ (sampling from $p(x|c_0)$ is simple as it is a Gaussian mixture model). This approach corresponds to convolving each mode of $p(x|c_0)$ with a normal distribution, which results in another normal mode. This allows for an easy evaluation of the density of the convolved distribution, as required for rejection sampling.

\subsection{Results}

For the best model from each training run, the learned densities are visualized in \cref{fig:Experiments_TRADE_shortcomings_density_best}. Examining the two models trained with the TRADE objective (subplots B1 to B5 for the grid-based version of TRADE and C1 to C5 for the grid-less version of TRADE), it is evident, that the models have successfully learned the distribution from the available training data at $c_0 = 1.0$ (see subplots B3 and C3). A visual comparison between the target and the learned distributions suggests that the models have also captured how the distribution scales with the external parameter. The number of learned modes matches the target across all visualized parameter values, and the shape and relative brightness (and therefore height) are consistent with the target. However, for the largest condition value (subplots B5 and C5), a slight mismatch in the shape of the learned modes can be observed. For the model trained only with negative log-likelihood training at $c_0 = 1.0$ (subplots D1 to D5), with latent power-scaling, it can be observed that while the number and position of modes are learned correctly, their relative height deviates from the target. This is expected, as the flow is non-volume-preserving, and there is no reason to assume that simply scaling the latent distribution would result in the correct scaling in data space. The model trained with reverse KL training (subplots E1 to E5) completely fails to reproduce the characteristics of the target distribution: The number of modes is incorrect, and both their positions and their shapes are misrepresented. While the widening of the distribution as $c$ increases is evident, the relative weighting of the individual modes is inconsistent over different values of $c$. For instance, in subplots E1 and E4, the highest modes appear to swap positions. The model trained with combined reverse KL and negative log-likelihood objective (subplots F1 to F5) captures the general characteristics and scaling properties of the target. However, for higher values of $c$, the relative heights and shapes of the learned modes deviate from the target. Examining the model trained with reweighting, it can be observed that for $c < 1.0$, where the target distribution is narrower than the distribution at $c_0$ from which the training data was sampled, the model closely matches the target. However, for larger values of $c$ (e.g. subplot G5), the distribution at $c_0$ is narrower than the target at $c$, providing only little training signal in the region of the tails of the target distribution. Consequently, the learned distribution's tails are too narrow in this regime of condition values compared to the target distribution. Looking at the remaining model, the volume-preserving flow with latent power-scaling, it is evident, that as expected, the scaling  with $c$ is correctly modeled based on the distribution learned at $c_0$. However, the relative heights of the different modes are inaccurate (see, for example, subplot H1) compared to the target. Additionally, the learned modes are connected by high-density bridges (see, for example, subplot H5).\newline

This qualitative evaluation of the performance of the different training approaches is supported by analyzing the average validation (forward) KL divergence between the target distribution and the learned distribution, as presented in \cref{tab:Experiments_Shortcomings_Base_line_methods_validation_KL}. The values in this table are based on 80,000 validation samples drawn from the target distribution at each of the specified condition values. The uncertainties shown in the table are approximated using bootstrapping, with 20 resampled sets.\newline

After evaluating the \textit{best} performing mode for each training run, it is also insightful to examine the \textit{final} model of each training run. The visualization of the densities for these snapshots of the training are presented in \cref{fig:Experiments_TRADE_shortcomings_density_last}. For the two models trained with TRADE, the model trained with reweighting, the model trained with negative log-likelihood training only, and the volume-preserving flow, the densities exhibit only minor differences between \cref{fig:Experiments_TRADE_shortcomings_density_best} and \cref{fig:Experiments_TRADE_shortcomings_density_last}. However, for the two models trained using the reverse KL objective, a significant change can be observed: In \cref{fig:Experiments_TRADE_shortcomings_density_best}, the model trained with reverse KL alone shows no resemblance to the target density. In contrast, in \cref{fig:Experiments_TRADE_shortcomings_density_last}, it has learned some of the modes correctly—one mode for the three largest values of $c$, and four or five modes for the two smallest values of $c$, respectively. For the model trained with the combination of reverse KL and negative log-likelihood training, all modes are captured in \cref{fig:Experiments_TRADE_shortcomings_density_best} but are subsequently forgotten in \cref{fig:Experiments_TRADE_shortcomings_density_last}. Similar to the model trained with reverse KL alone, this issue is particularly sever for large values of $c$. The only exception is $c = 1.0$, where negative log-likelihood training was performed, ensuring that all modes are properly learned. This observation confirms the well-known issue of mode-seeking behavior in reverse KL training. Notably, for the combined reverse KL and nll training, this result suggests that providing training data does not have a stabilizing effect on the model but merely prevents mode collapse at the specific parameter value where training data is available.

\begin{figure}
    \centering
    \includegraphics[width=0.75\linewidth]{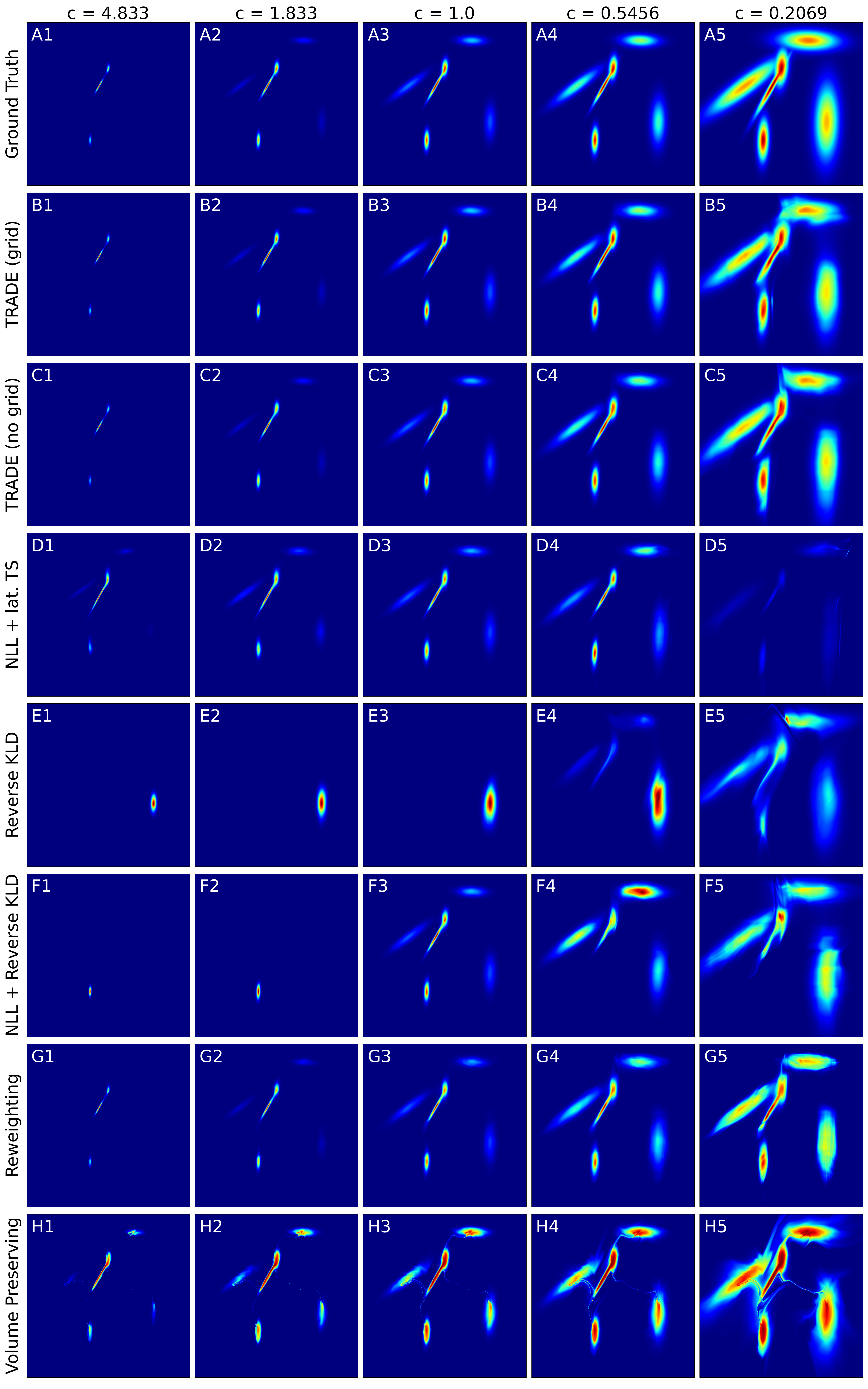}
    \caption{Visualization of the final learned densities for coupling based normalizing flows trained with different training objectives in comparison to the ground truth density (\textbf{A}) for model observed at the end of the training run evaluated at different external parameters. \textbf{B}: Model trained using TRADE with a discretized parameter space. \textbf{C}: TRADE trained with a continuous parameter space. \textbf{D}: Only nll training on training data at $c_0 = 1.0$ with latent power-scaling \textbf{E}: Reverse KL training \textbf{F}: Combined nll and reverse KL training. \textbf{G}: Training using reweighting of training samples at $c_0 = 1.0$ \textbf{H}: Volume-Preserving normalizing flow trained with nll training at $c_0$ and latent power-scaling.} 
    \label{fig:Experiments_TRADE_shortcomings_density_last}
\end{figure}

\begin{table}[]
    \centering
    \begin{tabularx}{\textwidth}{|c|>{\centering\arraybackslash}X|>{\centering\arraybackslash}X|>{\centering\arraybackslash}X|}
    \hline
    &KLD $c = 4.8330\downarrow$&KLD $c = 2.9764\downarrow$&KLD $c = 1.8330\downarrow$\\
    \hline
    \rowcolor{lightgray}TRADE (grid)&\textbf{0.011482$\pm$0.000487}&0.009177$\pm$0.000393&0.006239$\pm$0.000407\\
    \rowcolor{lightgray}TRADE (no grid)&0.011761$\pm$0.00054&\textbf{0.00746$\pm$0.000422}&\textbf{0.005112$\pm$0.000301}\\
    NLL + lat. TS&0.58906$\pm$0.0023&0.37628$\pm$0.00279&0.14435$\pm$0.00176\\
    Reverse KLD&9.19012$\pm$0.00337&6.45853$\pm$0.00596&4.63207$\pm$0.00463\\
    NLL + Reverse KLD&0.32929$\pm$0.00299&0.229$\pm$0.0023&0.11099$\pm$0.00169\\
    Reweighting&0.11298$\pm$0.0014&0.09674$\pm$0.00122&0.09815$\pm$0.00127\\
    Volume Preserving&0.80099$\pm$0.00236&0.55106$\pm$0.00293&0.26806$\pm$0.00274\\
    \hline
    \end{tabularx}
    
    \begin{tabularx}{\textwidth}{|c|>{\centering\arraybackslash}X|>{\centering\arraybackslash}X|>{\centering\arraybackslash}X|}
    \hline
    &KLD $c = 1.1288\downarrow$&KLD $c = 1.0\downarrow$&KLD $c = 0.8859\downarrow$\\
    \hline
    \rowcolor{lightgray}TRADE (grid)&0.003704$\pm$0.000284&0.004421$\pm$0.00029&0.004042$\pm$0.000397\\
    \rowcolor{lightgray}TRADE (no grid)&\textbf{0.002825$\pm$0.000299}&\textbf{0.003702$\pm$0.000323}&\textbf{0.003906$\pm$0.000357}\\
    NLL + lat. TS&0.022951$\pm$0.000657&0.019051$\pm$0.000589&0.032336$\pm$0.00087\\
    Reverse KLD&3.56005$\pm$0.00668&3.37188$\pm$0.00591&3.08986$\pm$0.00583\\
    NLL + Reverse KLD&0.04906$\pm$0.000899&0.049896$\pm$0.000861&0.06853$\pm$0.00115\\
    Reweighting&0.07873$\pm$0.0011&0.07396$\pm$0.00151&0.074053$\pm$0.000894\\
    Volume Preserving&0.11362$\pm$0.00201&0.09766$\pm$0.00153&0.05473$\pm$0.00172\\
    \hline
    \end{tabularx}
    
    \begin{tabularx}{\textwidth}{|c|>{\centering\arraybackslash}X|>{\centering\arraybackslash}X|>{\centering\arraybackslash}X|}
    \hline
    &KLD $c = 0.5456\downarrow$&KLD $c = 0.3360\downarrow$&KLD $c = 0.2069\downarrow$\\
    \hline
    \rowcolor{lightgray}TRADE (grid)&0.013757$\pm$0.000584&\textbf{0.035$\pm$0.00133}&\textbf{0.08647$\pm$0.0025}\\
    \rowcolor{lightgray}TRADE (no grid)&\textbf{0.013047$\pm$0.000637}&0.03706$\pm$0.00159&0.08707$\pm$0.00146\\
    NLL + lat. TS&0.16599$\pm$0.00211&0.46467$\pm$0.0031&0.92979$\pm$0.00628\\
    Reverse KLD&2.90968$\pm$0.00406&2.78683$\pm$0.00498&2.75597$\pm$0.00506\\
    NLL + Reverse KLD&0.14625$\pm$0.00254&0.34991$\pm$0.00488&0.74761$\pm$0.00944\\
    Reweighting&0.06681$\pm$0.00152&0.16942$\pm$0.00286&0.52953$\pm$0.00739\\
    Volume Preserving&0.09508$\pm$0.00208&0.15019$\pm$0.00351&0.2335$\pm$0.00415\\
    \hline
    \end{tabularx}

    \caption{Validation KL divergence between the target distribution and the models trained with various objective functions evaluated at different external parameters. The models selected for this evaluation are the best performing models under the average validation negative log-likelihood. Bold numbers indicate the best performing model at each condition value. The rows highlighted in gray represent the entries for the models trained with the TRADE approach. In this experiment TRADE outperforms the evaluated base line methods at each evaluated external parameter.}
    \label{tab:Experiments_Shortcomings_Base_line_methods_validation_KL}
\end{table}

\end{document}